\RequirePackage{iftex}
\ifpdftex
  \ifdefined\pdfvariable
    \pdfvariable minorversion=4  % pdfTeX 1.40+
  \else
  \fi
\fi
\documentclass[letterpaper,journal]{IEEEtran}
\usepackage{algorithmic}
\usepackage{algorithm}
\usepackage{array}
\usepackage[caption=false,font=footnotesize,labelfont=rm,textfont=rm]{subfig}
\usepackage{textcomp}
\usepackage{stfloats}
\usepackage{url}
\usepackage{verbatim}
\usepackage{graphicx}
\usepackage{pdfpages}
\usepackage{cite}
\usepackage{amsmath,amssymb,amsfonts}
\usepackage{amsthm}
\usepackage{xcolor,soul}
\usepackage{booktabs}
\usepackage{multirow}
\usepackage{makecell}
\usepackage{mathtools}

\usepackage{calligra}
\newtheorem{theorem}{Theorem}
\newtheorem{lemma}{Lemma}
\usepackage{comment}
\usepackage[colorlinks=true,linkcolor=blue, citecolor=green, urlcolor=magenta]{hyperref}

\allowdisplaybreaks
\DeclareMathOperator{\SO}{SO}
\DeclareMathOperator{\SE}{SE}
\newcommand{\soAlg}{\mathfrak{so}}
\DeclareMathOperator{\diag}{diag}
\DeclareMathOperator{\rank}{rank}

\DeclareMathOperator{\logm}{logm}
\DeclareMathOperator*{\argmax}{arg\,max}
\DeclareMathOperator*{\argmin}{arg\,min}

\newcommand{\bbR}{\mathbb{R}}

\newcommand{\reff}{r_{\mathrm{eff}}}
\newcommand{\Cov}{\mathbf{C}}
\newcommand{\bz}{\boldsymbol{z}}   % query (measurement)
\newcommand{\by}{\boldsymbol{y}}   % database / target (map)
\newcommand{\bp}{\boldsymbol{p}}
\newcommand{\bx}{\boldsymbol{x}}
\newcommand{\bV}{\boldsymbol{V}}
\newcommand{\bU}{\boldsymbol{U}}
\newcommand{\bSig}{\boldsymbol{\Sigma}}
\newcommand{\bM}{\boldsymbol{M}}
\newcommand{\bR}{\boldsymbol{R}}
\newcommand{\bOm}{\boldsymbol{\Omega}}
\newcommand{\bZ}{\boldsymbol{Z}}   % query matrix
\newcommand{\bY}{\boldsymbol{Y}}   % database / target matrix

\begin{document}

\title{\textsc{Rosetta}: Rotation Subspace Estimation\\for Training-free Transmodal Alignment}

\author{Alex~Junho~Lee%
\thanks{Submitted to IEEE Transactions on Image Processing, March 1, 2026.}
\thanks{Alex~Lee is with the School of Mechanical Systems Engineering,
Sookmyung Women's University, Seoul 04310, South Korea
(alexlee@sm.ac.kr).}%
}

\markboth{Submitted to IEEE Transactions on Image Processing, March 1, 2026}
{Lee: \textsc{Rosetta} --- Rotation Subspace Estimation for Training-free Transmodal Alignment}

\maketitle

% =====================================================================
%  ABSTRACT
% =====================================================================
\begin{abstract}
When a query must be matched against a database built with a
different sensor or processed by a different encoder, the independently
trained encoders impose distinct
coordinate systems on the feature space.
No canonical reference frame
exists to which all encoders can be aligned, and no single training procedure has fully closed this gap.
We show that the relative transformation between any two such systems
is a spatially preserving rotation in whitened
feature space, whose angular velocity
exhibits white-noise characteristics: large in norm, temporally
uncorrelated, and unpredictable from the features themselves.
However, the cross-covariance between encoder outputs has low effective
rank, constraining only a small fraction of the rotation parameters,
therefore local alignment is tractable.
\textsc{Rosetta} is a training-free calibration layer that estimates
the local rotation via an expectation-maximization procedure,
requiring no co-observation at query time.
We validate \textsc{Rosetta} on three public driving datasets
spanning LiDAR, camera, thermal, and event
modalities in cross-modal, cross-device,
cross-representation, and cross-backbone settings, with milliseconds overhead per query.
\end{abstract}

\begin{IEEEkeywords}
Cross-Modal Retrieval, Feature-Space Alignment, Representation Geometry, Place Recognition.
\end{IEEEkeywords}

% =====================================================================
%  I.  INTRODUCTION
% =====================================================================
\section{Introduction}
\label{sec:introduction}

\IEEEPARstart{V}{isual} place recognition achieves high
within-modality accuracy with modern encoders, including vision
foundation models such as DINOv3~\cite{DINOv3} for cameras
and sparse 3D convolution networks~\cite{MinkLoc3D} for LiDAR.
Cross-modal retrieval, where a LiDAR query is matched against a
camera-built database or vice versa, arises whenever the sensor
available at query time differs from the one used to build the
reference map, for example in heterogeneous robot fleets sharing a
single spatial database, cross-platform loop closure between vehicles
with different sensor suites, or legacy maps that must remain usable
after a sensor upgrade.
However, cross-modal retrieval accuracy drops to a
dataset-specific ceiling (Supplementary Sec.~B).
This degradation is not attributable to insufficient training:
jointly trained contrastive models retain a persistent modality
gap even after convergence~\cite{Liang2022MindGap},
and independently trained embedding spaces are in general
non-isomorphic~\cite{Vulic2020Isomorphic,Ormazabal2019Limitations},
so that no single global transformation can bridge
them~\cite{Glavas2020NonLinear}.

% =====================================================================
%  MAIN FIGURE (first page, top)
% =====================================================================
\begin{figure}[!t]
	\centering
	\includegraphics[width=0.85\linewidth]{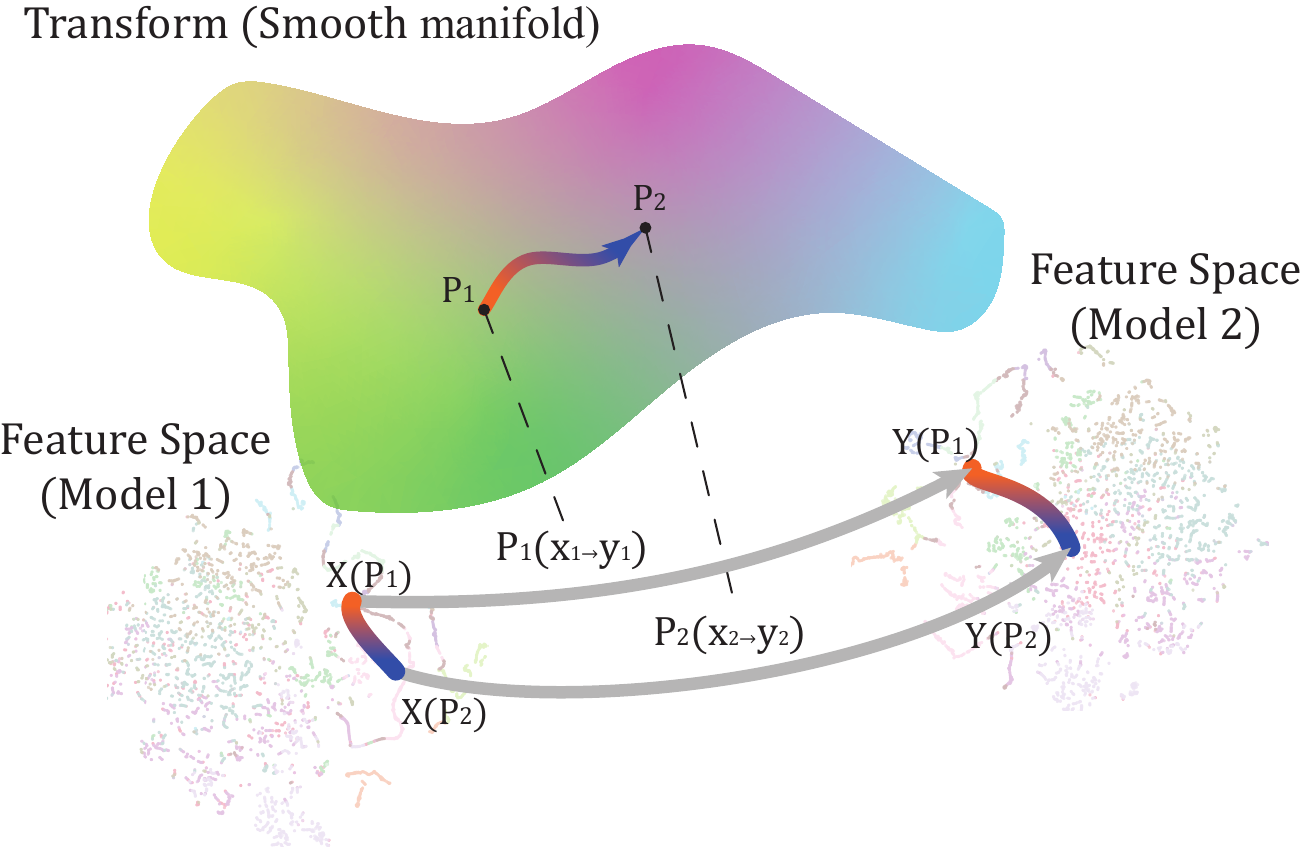}
	\caption{%
		Two independently trained models map locations
		$P_1, P_2$ on a smooth manifold (\emph{top}) to different
		coordinates $X(P_i)$ and $Y(P_i)$ in separate feature
		spaces (\emph{bottom}).
		The gray arrows indicate that the inter-model
		transformation $P_i\!(x_i{\to}y_i)$ differs between
		locations: the rotation at~$P_1$ does not equal
		the rotation at~$P_2$.
		Because $\bR(\bx)\in\SO(K)$ varies with position,
		no single global rotation can align the two spaces.
	}
	\label{fig:main}
\end{figure}

We show that this ceiling has a geometric origin.
Each encoder imposes its own coordinate system on the feature space;
after whitening, the inter-encoder misalignment at a given location
reduces to a rotation $\bR(\bx)\in\SO(K)$.
This rotation is non-stationary: because each encoder responds
independently to scene content, and scene content varies
quasi-randomly along a trajectory, the relative rotation inherits
this randomness, exhibiting white-noise angular velocity in the Lie
algebra $\soAlg(K)$.
The alignment at one location carries no predictive information about
the alignment at another, so no single global rotation can align
the two spaces (Sec.~\ref{sec:nonstationarity}).

However, the same geometric structure that renders global alignment
ill-posed makes local alignment tractable.
The cross-covariance between encoder outputs has low effective rank
$\reff \ll K$: most of the $K$ feature dimensions encode
modality-specific details (texture for cameras, geometry for LiDAR),
while only a low-dimensional subspace carries the place-level
information shared across modalities.
Because $\reff$ grows sublinearly with spatial coverage,
the number of signal-carrying rotation parameters remains small
within any local window, so a handful of correspondences suffices
to determine the rotation.
Finding these correspondences poses a circular dependency:
accurate cross-encoder matching requires the rotation, yet estimating
the rotation requires matched pairs.
Because the alignment depends only on the geometric relationship
between encoder outputs, it applies identically whether the encoders
differ in modality, sensor hardware, backbone architecture, or
training procedure; we therefore use the term \emph{transmodal},
and these settings are evaluated in Sec.~\ref{sec:experiments}.

We propose \textbf{\textsc{Rosetta}},
a training-free calibration layer that estimates the local Procrustes
rotation from coarse spatial priors alone, without modifying either
encoder.
A local Soft~EM procedure iteratively refines soft
correspondences and the rotation estimate; at query time,
only a coarse position prior is needed
(Sec.~\ref{sec:method}).
The aligned descriptors yield cross-modal correspondences that
serve as relative factors in a pose graph, connecting feature-space
alignment to position estimation
(Sec.~\ref{sec:graph_registration}).

\begin{figure*}[!t]
	\centering
	\includegraphics[width=\textwidth]{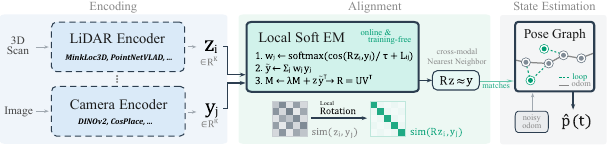}
	\caption{%
		Overview of the \textsc{Rosetta} pipeline.
		(Left)~Frozen encoders map each observation to
		$\bz_i, \by_j \in \bbR^K$; LiDAR and camera
		encoders are shown, but the method applies to any
		modality or backbone.
		(Center)~A local Soft~EM
		(Algorithm~\ref{alg:sliding}) computes soft weights
		$w_j$ from a spatial prior $L_{ij}$, accumulates the
		cross-covariance $\bM$, and recovers
		$\bR\!\in\!\SO(K)$ via SVD.
		Bottom: the cross-encoder similarity sharpens
		to diagonal after rotation.
		(Right)~The rotated descriptor
		$\bR\bz_i \!\approx\! \by_j$ yields cross-modal
		nearest-neighbor matches that enter a pose graph
		as loop-closure factors, producing
		the trajectory estimate $\hat{p}(t)$.}
	\label{fig:overview}
\end{figure*}

The principal contributions are as follows:
\begin{enumerate}
  \item \textbf{Observability and scaling law.}
    The cross-covariance between any two independently trained
    encoders has low effective rank $\reff \ll K$ that grows
    sublinearly with spatial coverage, so the local rotation
    is overdetermined from few observations
    (Secs.~\ref{sec:nonstationarity},~\ref{sec:theory}).

  \item \textbf{Training-free alignment layer.}
    \textsc{Rosetta} estimates the local $\SO(K)$ rotation
    between any two frozen encoders from coarse spatial
    priors alone, without retraining or modifying
    either encoder (Sec.~\ref{sec:method}).

  \item \textbf{Cross-representation registration.}
    The aligned descriptors provide cross-encoder correspondences
    that enter a pose graph as relative factors, enabling
    localization in cross-modal, cross-device,
    cross-representation, and cross-backbone settings
    (Sec.~\ref{sec:graph_registration},~\ref{sec:experiments}).
\end{enumerate}

Sec.~\ref{sec:related} reviews related work.
Sec.~\ref{sec:formulation} formalizes the gauge model
and non-stationarity.
Secs.~\ref{sec:method}--\ref{sec:conclusion} present the
alignment algorithm, experiments on KITTI-360, VIVID, and HeLiPR,
and discussion.

% =====================================================================
%  II.  RELATED WORK
% =====================================================================
\section{Related Works}
\label{sec:related}

\subsection{Place Recognition}
Place recognition has reached high accuracy within single modality:
frozen foundation models such as DINOv3~\cite{DINOv3} exceed $99\%$
recall on standard benchmarks~\cite{LowrySurveyVPR,NetVLAD,CosPlace,EigenPlaces}.
Cross-modal retrieval, by contrast, has not matched this performance.
Existing methods bridge the modality gap either by converting both
inputs to a shared representation, such as 2.5D range-depth
images~\cite{LC2}, bird's-eye-view projections~\cite{I2PMatching},
or DINOv2-processed range images~\cite{ImLPR2025}, or by learning a
joint embedding through contrastive objectives~\cite{VXP} or
vision-to-LiDAR distillation~\cite{ScaLR}.
In both cases the alignment is tied to the specific training
configuration and requires at least partial fine-tuning to transfer
to new domains.
In practice, however, retraining a foundation model is
impractical, and even fine-tuning assumes access to the target domain;
in many deployments the environment or sensor suite is completely
unseen at training time.
Domain adaptation methods such as CORAL~\cite{SunCORAL2016} and
multi-sensor SLAM systems~\cite{CadenaSLAMSurvey} similarly require
modifying the encoder to produce aligned features.
When both encoders are frozen and were trained independently, the
inter-encoder alignment varies with position, so no single global
transformation suffices (Sec.~\ref{sec:nonstationarity});
\textsc{Rosetta} calibrates this alignment post hoc by estimating
a local rotation from coarse spatial priors, without retraining
or modifying either encoder.

\subsection{Representation Alignment}
\label{sec:related_alignment}
The orthogonal Procrustes problem~\cite{Schoenemann1966} finds the
nearest rotation between two point configurations.
This global solution succeeds when the two embedding spaces share
sufficient structural similarity, a condition typically met within
a single modality or task. For example, cross-lingual word embeddings trained on comparable
corpora are approximately isomorphic~\cite{ConneauWord2018}, and
encoders trained with the same loss on the same data produce
quasi-isometric latent spaces whose similarity increases with
model capacity~\cite{Moschella2023Relative,Huh2024Platonic,Maystre2025Procrustes}.
However, non-linear misalignments that Procrustes cannot capture
persist~\cite{Vejendla2025DriftAdapter} even within these favorable settings.
Across modalities, Liang~et~al.~\cite{Liang2022MindGap} show that
even jointly trained contrastive encoders retain a persistent
modality gap caused by initialization geometry and the contrastive
objective.
For independently trained encoders the gap widens further,
and no single global linear map suffices to bridge
it~\cite{Ormazabal2019Limitations,Glavas2020NonLinear}.
Vuli\'{c}~et~al.~\cite{Vulic2020Isomorphic} show that the degree
of non-isomorphism varies across encoder pairs and correlates with
training conditions, suggesting the need for a per-pair diagnostic.
The local $\SO(K)$ rotation that we formalize in
Sec.~\ref{sec:gauge} offers a principled per-pair diagnostic.

All prior methods assume a single spatially invariant
alignment~\cite{Fumero2024LatentFM}, and even methods that account for spatial
variation~\cite{Bazeille2019fmralign,Andreella2023ProMises}
rely on a known domain geometry.
For encoders processing observations along a spatial trajectory,
neither assumption holds: the scene content changes continuously
along the trajectory, and each encoder responds to it differently,
making the alignment position-dependent
(Sec.~\ref{sec:gauge}), and $\bR(\bx)$ drifts continuously along the trajectory
(Sec.~\ref{sec:nonstationarity}).
\textsc{Rosetta} addresses this setting by recovering the local
rotation from coarse spatial priors alone, without requiring
co-observations at query time (Sec.~\ref{sec:method}).

% =====================================================================
%  III.  PROBLEM ANALYSIS AND THEORY
% =====================================================================
\section{Low-Rank Non-Stationary Alignment}
\label{sec:formulation}

This section formalizes the cross-encoder retrieval problem and
establishes two results that motivate the algorithm of
Sec.~\ref{sec:method}
(notation is summarized in Supplementary Table~A4).
First, the inter-encoder alignment $\bR(\bx)\!\in\!\SO(K)$ is
non-stationary, with angular velocity that behaves as white noise
in the Lie algebra $\soAlg(K)$, so no single global rotation can
align the two spaces (Sec.~\ref{sec:nonstationarity}).
Second, the cross-covariance between encoder outputs has low
effective rank $\reff \ll K$, which limits the number of rotation
parameters that must be estimated and makes local alignment
overdetermined from few observations
(Sec.~\ref{sec:theory}).

\subsection{Rotation Model}
\label{sec:setup}
Let $f_{\mathcal{A}}: \mathcal{X}_{\mathcal{A}} \to \bbR^D$ and
$f_{\mathcal{B}}: \mathcal{X}_{\mathcal{B}} \to \bbR^D$ be two
independently trained feature extractors.
A \emph{database} $\{(\by_j, \boldsymbol{q}_j)\}_{j=1}^{M}$ stores
Principal Component Analysis (PCA)-projected
target features $\by_j \in \bbR^K$
at known positions~$\boldsymbol{q}_j$.
A \emph{query} produces $\bz \in \bbR^K$
at an unknown position.  The retrieval task is to find the database index
\begin{equation}\label{eq:retrieval}
  j^* = \argmax_{j} \;\cos\!~\bigl(\bR\,\bz,\; \by_j\bigr),
\end{equation}
where $\bR \in \SO(K)$ is the alignment rotation to be determined.

Each encoder output is PCA-projected to $\bbR^K$ ($K{=}128$)
and L2-normalized to the unit sphere $\mathcal{S}^{K-1}$.
PCA centers each modality and reduces the ambient dimension,
making the $K{\times}K$ singular value decomposition (SVD) in Algorithm~\ref{alg:sliding}
tractable.
L2 normalization removes residual magnitude differences,
so the cosine similarity in Eq. \eqref{eq:retrieval} depends
only on the rotational alignment $\bR$.

As place-recognition encoders, $f_{\mathcal{A}}$ and
$f_{\mathcal{B}}$ map nearby locations to similar descriptors
and distant locations to dissimilar ones by design.
Since both encoders preserve pairwise similarity,
the inter-encoder mapping also preserves pairwise similarity.
Both encoders also observe the same physical locations, so their
feature distributions share dominant variance directions and
independent PCA projections to $\bbR^K$ preserve this relationship.
To isolate the rotational component of this mapping,
the centering step of PCA removes the mean offset between modalities,
and the projection onto the principal components decorrelates each feature space.
L2 normalization then maps each descriptor to $\mathcal{S}^{K-1}$,
removing magnitude differences.
With translation, covariance, and scale removed, the remaining
transformation is similarity-preserving on $\mathcal{S}^{K-1}$,
and the isometries of $\mathcal{S}^{K-1}$ are precisely the
orthogonal transformations.
The Procrustes solution (Eq.~\eqref{eq:procrustes}) restricts this
to $\SO(K)$, yielding the \emph{rotation model}.
PCA centering already absorbs the translational
component of $\SE(K)$, i.e. the modality
gap~\cite{Liang2022MindGap}; subtracting a per-window
mean on top of the global PCA mean changes recall by
less than $0.01$, so the residual translation is
negligible.
Empirically, independently trained encoders produce approximately
isometric feature spaces~\cite{Moschella2023Relative,Huh2024Platonic},
supporting this model.
Limitations, including the unique-descriptor assumption and the
empirical Procrustes residual ($0.36$ to $0.43$ across datasets),
are discussed in Sec.~\ref{sec:discussion}.

A single global rotation would suffice if all scenes induced the
same inter-encoder map.
The next two subsections show that this is not the case.
Different scenes induce different encoder-specific responses,
making $\bR$ position-dependent.

\subsection{Low-Rank Cross-Covariance}
\label{sec:gauge}
At spatial position $\bx$, the \emph{local alignment}
$\bR(\bx) \in \SO(K)$ is given by the orthogonal Procrustes solution
\begin{equation}\label{eq:procrustes}
  \bR(\bx) = \bU\,\diag(1,\ldots,1,\det(\bU\bV^\top))\,\bV^\top,
\end{equation}
where $\bU\bSig\bV^\top = \bM(\bx)$ is the SVD of the local
cross-covariance
\begin{equation}\label{eq:crosscov}
  \bM(\bx) = \bZ(\bx)^\top \bY(\bx) \;\in\; \bbR^{K \times K},
\end{equation}
where $\bZ(\bx), \bY(\bx) \in \bbR^{W \times K}$ stack the query and
database features within a spatial window of $W$ frames around~$\bx$.

In the PCA-projected feature space, $\bR(\bx)$ maps the query
coordinate frame to the database coordinate frame at position~$\bx$.
We use the term \emph{gauge} to emphasize that each encoder's
coordinate system is arbitrary, and only the relative alignment between the two can be estimated from data.
The Procrustes solution recovers $\bR(\bx)$ from the cross-covariance
$\bM(\bx)$ by the same SVD procedure that estimates $\SO(3)$
rotations in 3D point-cloud registration, extended here to $\SO(K)$.
The key difference is that $\bM$ is severely
rank-deficient ($\reff \ll K$), so most rotation parameters are
determined by orthogonality rather than data
(Sec.~\ref{sec:dof}).

\subsubsection{Cross-Covariance Decomposition}
\label{sec:decomposition}
When the two feature spaces are approximately
isometric, a property empirically observed in foundation
models~\cite{Moschella2023Relative,Huh2024Platonic}, the local
relationship between query and database features within a spatial
window can be modeled as
\begin{equation}\label{eq:local_model}
  \bz_i \;=\; \bR\,\by_i + \boldsymbol{n}_i,
\end{equation}
where $\bR \in \SO(K)$ is assumed constant within the window and
$\boldsymbol{n}_i$ is a zero-mean residual uncorrelated with $\by_i$.
Substituting Eq.~\eqref{eq:local_model} into
Eq.~\eqref{eq:crosscov} decomposes the cross-covariance into
signal and noise.
\begin{lemma}[Signal--Noise Decomposition]
\label{lem:factorization}
The cross-covariance decomposes as
\begin{equation}\label{eq:decomp}
  \bM \;=\; \bR\,\Cov_y \;+\; \boldsymbol{E},
\end{equation}
where $\Cov_y = \bY^\top\bY$ is the target self-covariance and
$\boldsymbol{E} = \boldsymbol{N}^\top\bY$ is the residual.
When the residual is small in Frobenius norm,
$\|\boldsymbol{E}\|_F \ll \|\bR\,\Cov_y\|_F$, the effective rank satisfies
$\reff(\bM) \approx \reff(\Cov_y)$.
\end{lemma}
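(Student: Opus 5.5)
The plan is to first establish the exact algebraic identity and then treat the effective-rank statement as a singular-value perturbation result.

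\textbf{Step 1 (identity).} Stack the local model of Eq.~\eqref{eq:local_model} row-wise over the $W$ frames of the window. Since $\bz_i^\top = \by_i^\top\bR^\top + \boldsymbol{n}_i^\top$, this reads $\bZ = \bY\bR^\top + \boldsymbol{N}$, with $\boldsymbol{N}\in\bbR^{W\times K}$ stacking the residuals. Substituting into Eq.~\eqref{eq:crosscov} gives
\begin{equation*}
\bM = \bZ^\top\bY = \bR\,\bY^\top\bY + \boldsymbol{N}^\top\bY = \bR\,\Cov_y + \boldsymbol{E}.
\end{equation*}
This is exact and needs no assumption beyond $\bR$ being constant within the window. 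The zero-mean, uncorrelated property of $\boldsymbol{n}_i$ is only used to interpret $\boldsymbol{E}$ as a residual whose expectation vanishes, $\mathbb{E}[\boldsymbol{N}^\top\bY]=0$.

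\textbf{Step 2 (signal spectrum).} Because $\bR$ is orthogonal, $\bR\,\Cov_y$ has exactly the same singular values as $\Cov_y$. Since $\Cov_y$ is symmetric positive semidefinite, these are its eigenvalues $\lambda_1\ge\dots\ge\lambda_K\ge 0$. Hence $\reff(\bR\,\Cov_y)=\reff(\Cov_y)$ exactly, for any effective-rank notion that depends only on the singular values. I will take the participation ratio $\reff(\boldsymbol{A})=\|\sigma(\boldsymbol{A})\|_1^2/\|\sigma(\boldsymbol{A})\|_2^2$; the entropy version is handled analogously.

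\textbf{Step 3 (perturbation).} By Mirsky's inequality,
\begin{equation*}
\|\sigma(\bM)-\sigma(\bR\,\Cov_y)\|_2 \le \|\boldsymbol{E}\|_F = \varepsilon\,\|\Cov_y\|_F,
\end{equation*}
where $\varepsilon$ denotes the relative residual. The $\ell_2$ norm of the spectrum therefore changes by a relative factor in $[1-\varepsilon,1+\varepsilon]$. The $\ell_1$ norm changes by at most $\sqrt{K}\,\varepsilon\,\|\sigma\|_2$, using $\|\cdot\|_1\le\sqrt{K}\|\cdot\|_2$. Plugging both bounds into the ratio yields
\begin{equation*}
\reff(\bM) = \reff(\Cov_y)\bigl(1+O(\varepsilon\sqrt{K/\reff(\Cov_y)})\bigr),
\end{equation*}
which tends to $\reff(\Cov_y)$ as $\varepsilon\to 0$. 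This is the claimed $\reff(\bM)\approx\reff(\Cov_y)$.

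\textbf{Main obstacle.} The main difficulty is Step~3, specifically the $\ell_1$ term. Spreading a small perturbation over the $K-\reff$ near-zero singular values can inflate $\|\sigma\|_1$ and thus $\reff$, which explains the $\sqrt{K}$ factor. To sharpen this, I would first try to exploit the structure of $\boldsymbol{E}=\boldsymbol{N}^\top\bY$: its row space lies in the column space of $\Cov_y$, because $\boldsymbol{E}=\boldsymbol{N}^\top\bY$ with $\bY=\bY\,\Cov_y^{+}\Cov_y$. Consequently $\rank\bM\le\rank\Cov_y$, and the perturbation cannot populate directions outside the signal subspace. This would replace $K$ by the numerical rank of $\Cov_y$. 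If that refinement fails, I will state the result with the explicit $\sqrt{K}\varepsilon$ condition as the meaning of ``small in Frobenius norm.''
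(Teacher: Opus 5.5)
Your proposal is correct and follows the same route as the paper's sketch. You substitute the stacked model into $\bM=\bZ^\top\bY$, use orthogonal invariance to get $\reff(\bR\,\Cov_y)=\reff(\Cov_y)$, and argue that a Frobenius-small $\boldsymbol{E}$ barely moves the singular values; Mirsky's inequality makes the paper's ``singular values are close'' step quantitative.

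Two points need tightening. First, the lemma's $\reff$ is the Shannon version of Eq.~\eqref{eq:reff}, and the summand $-\bar\sigma_k\ln\bar\sigma_k$ is not Lipschitz at $0$. The transfer from the participation ratio is therefore not literally ``analogous''; argue it by continuity on the simplex, or by a Fannes-type bound, which has rate $\delta\ln K+\delta\ln(1/\delta)$ in the total-variation distance $\delta$ between the normalized spectra rather than a linear rate. Second, your support argument $\rank\bM\le\rank\Cov_y$ replaces $K$ by the exact rank $\rank\Cov_y\le\min(W,K)$, not by the numerical rank.
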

\noindent\textit{Proof sketch.}\quad
Since $\bR$ is orthogonal, $\bR\,\Cov_y$ and $\Cov_y$ share the
same singular values, so
$\reff(\bR\,\Cov_y) = \reff(\Cov_y)$.
When $\|\boldsymbol{E}\|_F$ is negligible, the singular values of
$\bM$ are close to those of $\bR\,\Cov_y$, giving
$\reff(\bM) \approx \reff(\Cov_y)$.
Empirically, $\reff(\bM)/\reff(\Cov_y) \in [0.98,\, 1.02]$
across all three datasets (Sec.~\ref{sec:experiments}).
See Supplementary Sec.~A-1 for the full derivation.

\subsubsection{Degrees of Freedom}
\label{sec:dof}
A rotation in $K$ dimensions has $K(K{-}1)/2$ free parameters,
one for each pair of coordinate axes.
Because $\bM$ has effective rank $r \ll K$, the cross-covariance
only constrains how $\bR$ acts on the $r$-dimensional signal
subspace spanned by the top-$r$ singular vectors.
The action of $\bR$ on the remaining $K{-}r$ dimensions
is not constrained by the cross-covariance.
The signal subspace therefore determines $r(2K{-}r{-}1)/2$
of the total $K(K{-}1)/2$ rotation parameters
(Theorem~\ref{thm:identifiability}).
The orthogonality constraint $\bR^\top\bR = \boldsymbol{I}$
then propagates the known signal-subspace action to the null
space, fixing the remaining parameters.

The condition $\reff \ll K$ has a physical origin.
Each encoder's $K$-dimensional output captures all aspects of a scene:
texture, color, and lighting for cameras, and precise geometry and range
distributions for LiDARs.
The cross-covariance $\bM$ responds only to the subspace where
both modalities carry correlated information, namely the
place-level spatial layout (scene category, object configuration).
This shared subspace is low-dimensional, hence $\reff \ll K$.

When $\reff \ll K$, most parameters of $\bR \in \SO(K)$ lie in the
null space of $\bM$ and are resolved only by the orthogonality
constraint $\bR^\top\bR = \boldsymbol{I}$.
Theorem~\ref{thm:perturbation} (Sec.~\ref{sec:thm_perturb})
shows that these null-space components do not affect retrieval.
Features projected onto vanishing singular vectors carry near-zero
energy, so any orthogonal completion yields the same cosine
similarity.

\subsubsection{Empirical Rank of the Cross-Covariance}
\label{sec:rotation_spectrum}
The rank structure of $\bM$ can be quantified by the
effective rank~\cite{Roy2007EffectiveRank}, which is the exponential
Shannon entropy of the normalized singular value spectrum
$\{\sigma_k\}_{k=1}^{K}$, as follows:
\begin{equation}\label{eq:reff}
  \reff(\bM) = \exp~\!\Bigl(-\sum_{k=1}^{K} \bar{\sigma}_k
  \ln \bar{\sigma}_k\Bigr), \quad
  \bar{\sigma}_k = \frac{\sigma_k}{\sum_j \sigma_j}\,,
\end{equation}
With a window of $W{=}20$ frames
(dependence on spatial coverage in Supplementary Sec.~A-6)
and the encoder pair MinkLoc3Dv2~\cite{MinkLoc3Dv2}$\,\to\,$DINOv2-ViT-L~\cite{DINOv2},
chosen as a well-trained foundation model pair,
the spectrum is sharply concentrated across all three datasets:
$\sigma_1$ accounts for most of the Frobenius norm,
and $\reff$ ranges from $1.5$ (KITTI) through $5.1$ (VIVID)
to $9.9$ (HeLiPR), confirming $\reff \ll K{=}128$
(Fig.~\ref{fig:svd_spectrum}).

Since $\reff \ll K$, the local Procrustes solution determines
$\bR(\bx)$ from a small signal subspace at each position.
It remains to determine whether a single global $\bR$ suffices
or whether it must be estimated per window.

\begin{figure}[!t]
	\centering
	\subfloat[]{%
		\includegraphics[width=0.48\linewidth]{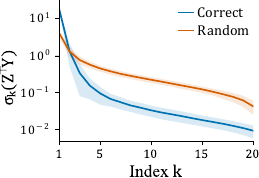}\label{fig:svd_traj}}
	\subfloat[]{%
		\includegraphics[width=0.48\linewidth]{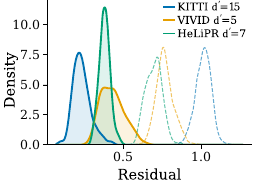}\label{fig:svd_kde}}
	\caption{Cross-covariance analysis ($W\!=\!20$, $K\!=\!128$).
		\protect\subref{fig:svd_traj}~Singular-value spectrum of
		$\mathbf{Z}^\top\!\mathbf{Y}$ on KITTI-360: correct pairings
		(blue) concentrate energy in $\sigma_1$; random pairings (orange)
		produce a flat spectrum.
		\protect\subref{fig:svd_kde}~Distribution of the Procrustes
		residual $\|\mathbf{Z}{-}\mathbf{Y}\mathbf{R}^\top\|_F$
		for correct (solid) and random (dashed) pairings on all three
		datasets. $d'$ denotes the separation between the two
		distributions.}
	\label{fig:svd_spectrum}
\end{figure}

\subsection{Non-Stationary Alignment}
\label{sec:nonstationarity}
A single global $\bR$ would suffice if the cross-covariance
structure were constant across positions.
In practice, as a window slides along the traversal (indexed
by~$t$), the top singular vectors
$(\boldsymbol{u}_1(t), \boldsymbol{v}_1(t))$ of $\bM(t)$
rotate substantially from one window to the next,
indicating that the alignment direction, not merely its
magnitude, varies with position.
Fig.~\ref{fig:nonstationarity} shows the retrieval matrix on VIVID
(MinkLoc3Dv2\,$\to$\,DINOv2-ViT-L).
Without rotation alignment, cross-modal descriptors after PCA and
L2 normalization yield only $0.04$ diagonal retrievals.
Per-window alignment raises this to $0.99$, while a single global
rotation achieves only ${\sim}0.47$ (Sec.~\ref{sec:global_fails}),
illustrating that $\bR(\bx)$ must be estimated locally.

\subsubsection{Local Rotation Dynamics}
\label{sec:lie_dynamics}
\label{thm:nonstationary}
The angular velocity between consecutive windows is defined as:
\begin{equation}\label{eq:omega}
  \bOm(t) = \logm~\!\bigl(\bR(t)^\top \bR(t{+}1)\bigr) \;\in\; \soAlg(K).
\end{equation}
Empirically, $\bOm(t)$ behaves as high-dimensional white noise.
The norm $\|\bOm(t)\|$ averages $13.1$\,rad/frame across all three
datasets, and the direction autocorrelation is
$-0.018 \pm 0.03$, which is consistent with zero.
Vectorizing $\bOm(t)$ into its $K(K{-}1)/2$ independent
upper-triangular entries gives
$\reff(\bOm) \approx K(K{-}1)/3$, meaning the rotation change
is not confined to a low-dimensional subspace but spreads across
roughly two-thirds of the available dimensions.
Canonical correlation analysis (CCA)~\cite{Hotelling1936CCA}
between $\bOm(t)$ and the feature difference $\Delta\bz(t)$
also measures only $0.035$, indicating negligible linear dependence
between the rotation change and the observed features.

While $\bR(t)$ itself is computable from the current window via
Eq.~\eqref{eq:procrustes}, the change $\bOm(t)$ cannot
be anticipated from features alone, necessitating per-window
estimation.
Additional evidence, including autocorrelation
(Supplementary Fig.~A2) and pairwise cosine heatmaps
(Supplementary Fig.~A3), is provided in
Supplementary Secs.~A-2, C, and~J.

\begin{figure}[!t]
	\centering
	\vspace{-4mm}
	\subfloat[]{%
		\includegraphics[height=120pt]{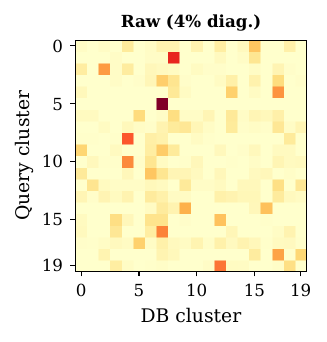}\label{fig:nonstat_raw}}%
	\hfill
	\subfloat[]{%
		\includegraphics[height=120pt]{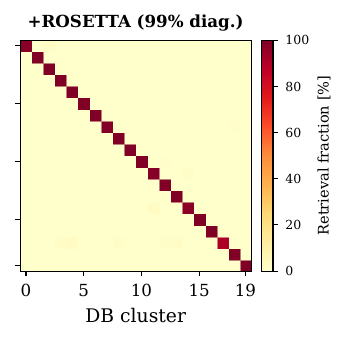}\label{fig:nonstat_aligned}}
	\caption{Cross-modal retrieval matrix on VIVID
		(MinkLoc3Dv2\,$\to$\,DINOv2-ViT-L).
		Each cell shows the fraction of queries from one spatial cluster
		retrieved from a given database cluster; a perfect retriever
		concentrates all mass on the diagonal.
		\protect\subref{fig:nonstat_raw}~Raw descriptors yield 0.04
		diagonal, comparable to random retrieval.
		\protect\subref{fig:nonstat_aligned}~After \textsc{Rosetta} alignment, 0.99 is on the diagonal.}
	\label{fig:nonstationarity}
\end{figure}

\subsubsection{Global Alignment Bound}
\label{sec:global_fails}
\label{cor:global_bound}
The global Procrustes solution fits a single rotation
$\bR^* = \argmin_{\bR} \sum_t \|\bR\,\bz(t) - \by(t)\|^2$
to all windows simultaneously, averaging over local rotations
that differ substantially from one another.
Since $\|\bOm\| \gg 0$ with zero autocorrelation,
the local rotations disperse widely through $\SO(K)$
without systematic bias, and their average loses directional
information, just as averaging uniformly distributed unit
vectors on a sphere yields the zero vector.
Accordingly, the global solution achieves only ${\sim}0.47$
on KITTI-360, far below the $0.99$ obtained by per-window
alignment
(Sec.~\ref{sec:exp_main}; Supplementary Secs.~A-3 and~G).
A training sweep over twenty configurations confirms that this
ceiling is intrinsic to the non-stationarity, not an artifact of
any particular training recipe
(Sec.~\ref{sec:exp_negative}, Table~\ref{tab:negative}).

Having established that global alignment fails due to
non-stationarity, the next section shows that local
alignment is, in contrast, overdetermined.

\subsection{Observability of Local Alignment}
\label{sec:theory}

\subsubsection{Signal and Null Subspace Decomposition}
\label{sec:thm_ident}

\begin{theorem}[Orthogonal Completion]
\label{thm:identifiability}
Let $\bM \in \bbR^{K \times K}$ have effective rank
$r = \lceil \reff(\bM) \rceil$ (ceiling of the effective rank,
Eq.~\eqref{eq:reff}).
The Procrustes solution $\bR \in \SO(K)$ has $K(K{-}1)/2$ free
parameters, of which exactly $r(2K{-}r{-}1)/2$ are determined by
the top-$r$ singular triplets of~$\bM$ (\emph{signal subspace}),
and the remaining $(K{-}r)(K{-}r{-}1)/2$ are fixed by
orthogonality alone (\emph{null-space completion}).
\end{theorem}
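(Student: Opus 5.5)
The plan is to make the counting precise by viewing $\SO(K)$ through a fibration over a Stiefel manifold. Write the SVD $\bM = \bU\bSig\bV^\top$ and split $\bU = [\bU_r,\bU_\perp]$, $\bV = [\bV_r,\bV_\perp]$, where $\bU_r,\bV_r \in \bbR^{K\times r}$ collect the top-$r$ singular vectors, with $r=\lceil \reff(\bM)\rceil$. The signal part of the Procrustes objective $\operatorname{tr}(\bR^\top\bM)$ only involves the action of $\bR$ on $\mathrm{span}(\bV_r)$. So I would first show that the top-$r$ triplets fix this action: $\bR\bV_r = \bU_r$.

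\emph{Step 1 (signal subspace).} Maximize $\operatorname{tr}(\bR^\top \bU_r\bSig_r\bV_r^\top)=\sum_{k\le r}\sigma_k\,\boldsymbol{u}_k^\top\bR\boldsymbol{v}_k$. Each term is at most $\sigma_k$. Equality holds iff $\bR\boldsymbol{v}_k=\boldsymbol{u}_k$ for all $k\le r$, assuming $\sigma_r>0$. Hence the signal triplets pin down the orthonormal $r$-frame $\bR\bV_r$.

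\emph{Step 2 (counting).} That frame is a point of the Stiefel manifold $V_r(\bbR^K)$, whose dimension is
\[
Kr - \tfrac{r(r+1)}{2} = \tfrac{r(2K-r-1)}{2}.
\]
The count follows because there are $Kr$ entries subject to $r(r+1)/2$ orthonormality constraints. This is the number of rotation parameters determined by the data. The map $\SO(K)\to V_r(\bbR^K)$, $\bR\mapsto \bR\bV_r$, is a smooth submersion, with fibre diffeomorphic to $\SO(K-r)$ for $r<K$. The fibre consists of rotations of $\mathrm{span}(\bV_\perp)$ onto $\mathrm{span}(\bU_\perp)$, via $\bR\bV_\perp=\bU_\perp \boldsymbol{Q}$ with $\boldsymbol{Q}$ ranging over an orthogonal coset. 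Its dimension is $(K-r)(K-r-1)/2$, and one checks directly that
\[
\tfrac{r(2K-r-1)}{2}+\tfrac{(K-r)(K-r-1)}{2}=\tfrac{K(K-1)}{2}.
\]

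\emph{Step 3 (completion by orthogonality).} I would then argue that the fibre is fixed by $\bR^\top\bR=\boldsymbol{I}$ together with the Procrustes convention of Eq.~\eqref{eq:procrustes}. Orthogonality forces $\bR$ to map $\mathrm{span}(\bV_\perp)$ onto $\mathrm{span}(\bU_\perp)$, and Eq.~\eqref{eq:procrustes} selects $\boldsymbol{Q}=\diag(1,\ldots,1,\det(\bU\bV^\top))$. The determinant sign is absorbed in the last null direction, so $\bR\in\SO(K)$.

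The main obstacle is the interpretation of ``exactly''. The effective rank is an entropy and not an algebraic rank, so the tail singular values $\sigma_{r+1},\dots$ are small but not necessarily zero. To handle this I would state the result for the rank-$r$ truncation $\bM_r=\bU_r\bSig_r\bV_r^\top$, where the fibre argument is exact. For $\bM$ itself I would note that the tail only perturbs the fibre coordinate by an amount controlled by $\sigma_{r+1}/\sigma_r$, via a Wedin-type bound, and that this is deferred to Theorem~\ref{thm:perturbation}. A second caveat is that Step 1 needs $\sigma_1,\dots,\sigma_r$ positive, and distinct if the individual vectors (not just the subspace) are to be unique. With repeated singular values the frame is still determined up to the same joint rotation of $\bU_r$ and $\bV_r$, which leaves $\bU_r\bV_r^\top$ and hence the count unchanged.
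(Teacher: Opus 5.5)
Your argument is correct. It is the group-level version of the paper's proof, which works in the Lie algebra instead. The paper sorts the $K(K-1)/2$ axis-pair generators of $\soAlg(K)$, in the SVD-adapted basis, into three blocks. These are signal--signal ($r(r-1)/2$), signal--null ($r(K-r)$) and null--null ($(K-r)(K-r-1)/2$). It attributes the first two to the top-$r$ triplets and the third to $\bR^\top\bR=\boldsymbol{I}$. Your fibration $\SO(K-r)\to\SO(K)\to V_r(\bbR^K)$ integrates exactly this split. The base has dimension $r(r-1)/2+r(K-r)=Kr-r(r+1)/2$, and its tangent directions are the first two blocks. The vertical tangent space of the fibre is the null--null block $\soAlg(K-r)$.

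What your route buys is a precise meaning for ``determined.'' Step~1 shows that the optimizers of the signal part form exactly one fibre. The data-determined count is then a manifold dimension, and the completion is a set of equally optimal rotations rather than a loosely defined group of parameters. It also makes clear that the sign fix needs $r<K$. The paper's route is more elementary. It is also phrased in the same $\soAlg(K)$ block language as the perturbations $\delta\bR$ in Theorem~\ref{thm:perturbation}, which is how it feeds that result.

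Three corrections to your write-up follow.

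\textbf{Step~3 wording.} Your own Step~2 shows that orthogonality does not fix the null--null block, since every fibre point is orthogonal and equally optimal for $\bM_r$. For $\bM_r$ you can replace the SVD null bases independently by $\bU_\perp\boldsymbol{A}$ and $\bV_\perp\boldsymbol{B}$. Doing so sweeps the output of Eq.~\eqref{eq:procrustes} over the whole fibre. Step~3 should therefore say that these parameters are constrained only by orthogonality and set by convention. That is harmless only because of Theorem~\ref{thm:perturbation}.

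\textbf{The Wedin remark.} This remark is misdirected. Eq.~\eqref{eq:procrustes} gives $\bR\boldsymbol{v}_k=\boldsymbol{u}_k$ for all $k<K$. So the Procrustes solution of $\bM$ itself lies exactly in the fibre over $\bU_r$, and the signal-part statement is exact for $\bM$, not just for $\bM_r$. The tail does not displace anything. It only selects a fibre point through its singular vectors, which are the ill-conditioned quantities of Eq.~\eqref{eq:perturb_null}. What is small is the effect of that selection on cosines, as in Eq.~\eqref{eq:null_irrelevant}; gap-based bounds belong to Eq.~\eqref{eq:perturb_signal}.

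\textbf{Positivity and gap.} Positivity of $\sigma_1,\dots,\sigma_r$ is automatic, because $1\le\reff(\bM)\le\rank(\bM)$ forces $\lceil\reff(\bM)\rceil\le\rank(\bM)$. What you do need is a gap $\sigma_r>\sigma_{r+1}$ at the cut. Without it the top-$r$ subspace, and hence $\bU_r\bV_r^\top$, is not well defined. Repeated values inside the top $r$ are harmless, as you note.
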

The proof partitions the $K(K{-}1)/2$ rotation parameters by the
subspaces they couple: signal-to-signal, signal-to-null, and null-to-null.
The first two groups are constrained by the top-$r$ singular
triplets, yielding $r(2K{-}r{-}1)/2$ data-determined parameters;
the third group is fixed by $\bR^\top\bR = \boldsymbol{I}$.
The full derivation is given in Supplementary Sec.~A-4.
For $K{=}128$ (total DOF $= 8128$), the data-determined parameters
amount to only $253$ ($3.1\%$) on KITTI ($\reff{=}1.5$),
$747$ ($9.2\%$) on VIVID ($\reff{=}5.1$),
and $1225$ ($15.1\%$) on HeLiPR ($\reff{=}9.9$).
The vast majority of the rotation is thus fixed by orthogonality
alone.

\subsubsection{Anisotropic Sensitivity}
\label{sec:thm_perturb}

\begin{theorem}[Anisotropic Sensitivity]
\label{thm:perturbation}
Under additive perturbation $\boldsymbol{E}$ on $\bM$, the
signal-subspace component of $\bR$ is stable, while the null-subspace
component is poorly determined but irrelevant to retrieval.
Specifically, the induced perturbation $\delta\bR$ satisfies
\begin{align}
  \|\delta\bR_{\mathrm{signal}}\| &\;\lesssim\;
    \frac{\|\boldsymbol{E}_r\|}{\sigma_r - \sigma_{r+1}},
    \label{eq:perturb_signal} \\
  \|\delta\bR_{\mathrm{null}}\| &\;\sim\;
    \frac{\|\boldsymbol{E}\|}{\epsilon},
    \label{eq:perturb_null}
\end{align}
where $\sigma_r$ is the $r$-th singular value and $\epsilon \to 0$
for the null singular values.
The spectral gap $\sigma_r - \sigma_{r+1}$ stabilizes the signal
component, per standard SVD perturbation
theory~\cite{HornJohnsonMatrix}.
Although $\|\delta\bR_{\mathrm{null}}\|$ diverges as
$\epsilon \to 0$, the features carry near-zero energy along
the null directions ($\|\bz_{\perp}\| \ll \|\bz_{\parallel}\|$),
so the retrieval error is bounded by the spectral ratio:
\begin{equation}\label{eq:null_irrelevant}
  \bigl|\cos(\bR\,\bz,\,\by) - \cos(\tilde{\bR}\,\bz,\,\by)\bigr|
  \;\leq\; \mathcal{O}\!\Bigl(\frac{\sigma_{r+1}}{\sigma_1}\Bigr).
\end{equation}
\end{theorem}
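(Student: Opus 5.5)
The plan is to treat the Procrustes map as a smooth function of $\bM$ on the open set where $\bM$ has distinct, nonzero singular values. We differentiate it using first-order SVD perturbation theory, then split the resulting $\delta\bR$ according to the signal/null partition of Theorem~\ref{thm:identifiability}.

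\textbf{Step 1: the skew-symmetric correction.} Write $\bM=\bU\bSig\bV^\top$ and $\tilde{\bM}=\bM+\boldsymbol{E}$. Ignore the $\det$ correction in Eq.~\eqref{eq:procrustes}; it is locally constant away from sign flips. Use the polar-factor representation $\bR=\bU\bV^\top$. Set $\tilde\bR=\bR\exp(\boldsymbol{\Delta})$ with $\boldsymbol{\Delta}$ skew and express everything in the singular bases. The standard first-order computation for the polar factor then gives, in $(\bV,\bV)$ coordinates,
\begin{equation*}
\Delta_{ij}=\frac{\hat E_{ij}-\hat E_{ji}}{\sigma_i+\sigma_j},\qquad \hat{\boldsymbol{E}}=\bU^\top\boldsymbol{E}\bV .
\end{equation*}
For pairs with $i,j\le r$ the denominator is at least $\sigma_r$. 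For pairs with $i\le r<j$ it is at least $\sigma_r$ as well; replacing the sum by the gap $\sigma_r-\sigma_{r+1}$ in the stated bound comes from the Wedin/Davis--Kahan bound on the rotation of the top-$r$ singular subspaces (see~\cite{HornJohnsonMatrix}). Restricting $\boldsymbol{E}$ to the entries touching the first $r$ indices (call this $\boldsymbol{E}_r$) and summing in Frobenius norm yields Eq.~\eqref{eq:perturb_signal}. These are exactly the signal-to-signal and signal-to-null parameters counted in Theorem~\ref{thm:identifiability}.

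\textbf{Step 2: the null block.} For $i,j>r$ the denominator is $\sigma_i+\sigma_j\le 2\epsilon$, so $|\Delta_{ij}|$ scales like $\|\boldsymbol{E}\|/\epsilon$. This gives Eq.~\eqref{eq:perturb_null} as an order-of-magnitude statement, not a bound. It also shows that the null-to-null block of $\bR$ is effectively arbitrary, consistent with its being fixed only by orthogonality. Here I would state the result as a generic scaling and not attempt a rigorous inequality, since the linearization breaks down once $\|\boldsymbol{E}\|\gtrsim\epsilon$.

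\textbf{Step 3: retrieval insensitivity, the main obstacle.} We must show that the possibly huge null-block change does not move the cosine.

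\emph{Decomposition.} Decompose $\bz=\bz_\parallel+\bz_\perp$ along $\mathrm{span}(\bV_{1:r})$ and its complement. Then $\cos(\bR\bz,\by)-\cos(\tilde\bR\bz,\by)=\langle(\bR-\tilde\bR)\bz,\by\rangle$ for unit vectors. Split this as
\begin{equation*}
\langle(\bR-\tilde\bR)\bz_\parallel,\by\rangle+\langle(\bR-\tilde\bR)\bz_\perp,\by\rangle .
\end{equation*}
The first term is controlled by Step 1 and is second-order small. The second is bounded by $2\|\bz_\perp\|$, since $\bR-\tilde\bR$ has norm at most $2$ regardless of how badly the null block is determined.

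\emph{The energy assumption.} What remains is to bound $\|\bz_\perp\|$ by the spectral ratio. This is where a modelling hypothesis is needed. Within the window, $\sum_i\|\bV_{r+1:K}^\top\bz_i\|^2$ relates to the tail of the spectrum through Lemma~\ref{lem:factorization}: $\bM\approx\bR\Cov_y$, so the singular values of $\bM$ are the eigenvalues of $\Cov_y$ up to $\|\boldsymbol{E}\|$. Hence the energy in the tail directions is at most $\sigma_{r+1}$ per direction, relative to $\sigma_1$ for the leading one.

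\emph{The gap.} This gives $\|\bz_\perp\|^2/\|\bz_\parallel\|^2\lesssim(K-r)\sigma_{r+1}/\sigma_1$, which is a square-root bound with a dimension factor, not the clean $\mathcal{O}(\sigma_{r+1}/\sigma_1)$ of Eq.~\eqref{eq:null_irrelevant}. To obtain the stated rate I would first try to exploit $\by$ too: $\by$ also has small tail energy, so $\langle(\bR-\tilde\bR)\bz_\perp,\by\rangle$ pairs two small components. Applying Cauchy--Schwarz to $\bz_\perp$ and $\by_\perp$ separately gives a product of two square roots, i.e.\ linear in $\sigma_{r+1}/\sigma_1$, with constants absorbed into $\mathcal{O}$.

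The delicate point is justifying that the perturbed null block maps $\bz_\perp$ essentially into the null space of $\by$. Otherwise it leaks into the signal directions, where $\by$ has energy. Step 1's bound on the signal-to-null block is exactly what controls this leakage, so the argument should close.
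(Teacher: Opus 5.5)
Your Steps~1--2 are fine at the level of rigor of the statement. The polar-factor formula is correct, and since $\sigma_i+\sigma_j\ge\sigma_r\ge\sigma_r-\sigma_{r+1}$ it gives Eq.~\eqref{eq:perturb_signal} at first order. Your split $\bz=\bz_\parallel+\bz_\perp$ is also the decomposition the paper uses. The gap is in what you take $\tilde{\bR}$ to be in Eq.~\eqref{eq:null_irrelevant}.

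You read $\tilde{\bR}$ as the Procrustes solution of $\bM+\boldsymbol{E}$ and assert that $\langle(\bR-\tilde{\bR})\bz_\parallel,\by\rangle$ is second-order small. That holds only for the matched candidate $\by=\bR\bz$: there, skew-symmetry reduces the linear term $-\langle\bR\boldsymbol{\Delta}\bz_\parallel,\by\rangle$ to $-\langle\boldsymbol{\Delta}\bz_\parallel,\bz_\perp\rangle$. For every other database vector the term is genuinely first order, of size $\|\boldsymbol{E}_r\|/\sigma_r$, and independent of $\sigma_{r+1}/\sigma_1$. So under your reading the stated bound cannot hold; take $\sigma_{r+1}=0$. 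This step and your leakage argument also rely on the linearization $\tilde{\bR}=\bR\exp(\boldsymbol{\Delta})$, which you concede in Step~2 fails once $\|\boldsymbol{E}\|\gtrsim\epsilon$. For the window matrix of Eq.~\eqref{eq:crosscov}, $\rank\bM\le W<K$, so the null singular values are exactly zero and the completion is not even unique.

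The intended reading, per Sec.~\ref{sec:dof} (``any orthogonal completion yields the same cosine similarity''), is that $\tilde{\bR}$ agrees with $\bR$ on the signal subspace and differs only in the null-space completion. Then $\tilde{\bR}\bz_\parallel=\bR\bz_\parallel$ exactly. An orthogonal map with $\tilde{\bR}\boldsymbol{v}_k=\boldsymbol{u}_k$ for $k\le r$ must send $\mathrm{span}(\bV_{1:r})^\perp$ onto $\mathrm{span}(\bU_{1:r})^\perp$. Hence $(\bR-\tilde{\bR})\bz_\perp$ is orthogonal to the signal directions, so there is no leakage at all. The difference equals $\langle(\bR-\tilde{\bR})\bz_\perp,\by_\perp\rangle$, bounded by $2\|\bz_\perp\|\,\|\by_\perp\|$. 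The sensitivity of the signal block to $\boldsymbol{E}$ is the separate content of Eq.~\eqref{eq:perturb_signal} and should not be folded into the retrieval bound.

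Second, the energy hypothesis cannot be extracted from Lemma~\ref{lem:factorization} as you propose. The spectrum of $\bM\approx\bR\,\Cov_y$ controls only the window-summed tail energy on the database side, $\sum_i\|\bV_{r+1:K}^\top\by_i\|^2\approx\sum_{k>r}\sigma_k$. It says nothing about $\|\by_\perp\|$ for an individual candidate, including those outside the window that global nearest-neighbour retrieval also scores. The query side is worse: the residual $\boldsymbol{n}_i$ of Eq.~\eqref{eq:local_model} enters $\bM$ only through $\boldsymbol{N}^\top\bY$. Its energy along the null directions is therefore invisible to the spectrum of $\bM$, even on average.

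The paper's argument does not derive this from $\bM$ either. It posits near-zero null-direction energy ($\|\bz_\perp\|\ll\|\bz_\parallel\|$, at the level of the spectral ratio) as a hypothesis and bounds the cosine change by the null component of $\bz$. Your remark that the $\sigma_k$ are second moments, so amplitudes scale like $\sqrt{\sigma_{r+1}/\sigma_1}$, is correct and shows that this hypothesis is heuristic. Under the completion reading, your pairing bound $2\|\bz_\perp\|\,\|\by_\perp\|$ is exact and improves on bounding by $\|\bz_\perp\|$ alone. However, turning it into $\mathcal{O}(\sigma_{r+1}/\sigma_1)$ for an arbitrary pair $(\bz,\by)$ requires stating a per-vector tail-energy assumption on both sides explicitly, rather than claiming it follows from $\bM$.
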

The proof decomposes $\bz$ into signal and null components and
shows that the retrieval error is controlled by the spectral ratio
$\sigma_{r+1}/\sigma_1$.
The full derivation is given in Supplementary Sec.~A-5.

\subsubsection{Sublinear Rank Growth}
\label{sec:thm_scaling}

\begin{figure}[!t]
	\centering
	\includegraphics[width=0.8\linewidth]{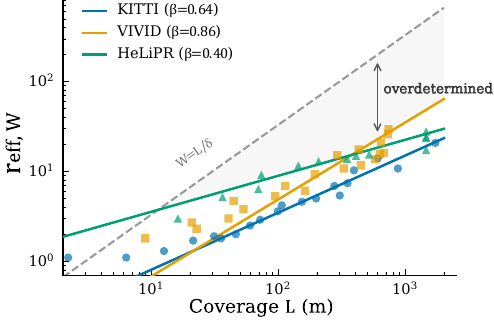}
	\caption{Effective rank $\reff$ versus window coverage~$L$.
		Colored lines: power-law fit
		$\reff = a\,L^{\beta}$.
		Gray dashed: number of paired observations
		$W = L/\delta$, where $\delta$ is the spatial
		sampling interval.
		Since $\beta < 1$, $W$ grows faster than $\reff$,
		so local alignment is overdetermined for all tested~$L$
		(Sec.~\ref{sec:cor_local}).}
	\label{fig:scaling}
	\vspace{-3mm}
\end{figure}

Across three datasets (KITTI-360, VIVID, HeLiPR) and two modality pairs,
the effective rank of the cross-covariance grows sublinearly with spatial
coverage~$L$:
\begin{equation}\label{eq:scaling}
  \reff \;=\; a\,L^{\beta}, \qquad \beta < 1.
\end{equation}
The exponent $\beta$ and coefficient $a$ are properties of the deployment
(sensor, encoder, environment).  The constraint $\beta < 1$ is observed
across all tested environments and is theoretically expected whenever
the feature manifold has finite intrinsic dimension.
As the spatial window grows, new feature directions are introduced at
a rate proportional to the marginal environmental diversity.  Because
the feature manifold has finite intrinsic dimension, this rate
diminishes, i.e., new locations increasingly fall within
already-observed subspaces.
Experimentally, $\beta$ ranges from $0.40$ to $0.86$ across three environments
(Fig.~\ref{fig:scaling}), with $R^2 > 0.88$.
The fitted exponents are $\beta = 0.64$ (KITTI), $0.86$ (VIVID),
and $0.40$ (HeLiPR), with 95\% confidence intervals strictly
between $0$ and $1$ on all three datasets
(fitting details in Supplementary Sec.~A-6;
model selection in Supplementary Sec.~E).

\subsubsection{Overdetermination of Local Alignment}
\label{sec:cor_local}

Combining Theorem~\ref{thm:identifiability} with the sublinear
scaling law (Eq.~\eqref{eq:scaling}) shows that local alignment is
overdetermined.
A sliding window of $W$ paired observations spanning coverage~$L$
provides $W$ constraints, while the number of data-determined
parameters is $r(2K{-}r{-}1)/2$ with
$r = \lceil a\,L^{\beta}\rceil$.
Because $\beta < 1$, the number of constraints $W$ grows faster
than $r$, so the system becomes increasingly overdetermined as
$L$ increases (Supplementary Sec.~A-6).
Fig.~\ref{fig:scaling} shows that $W$ exceeds $\reff$ across all
tested coverage ranges; in practice, $r \leq W$ holds for
$W \geq 7$.

\subsection{Estimation of the Local Alignment}
\label{sec:graph_registration}
The preceding results assume that matched pairs $(\by_i, \bz_i)$
are available.
In practice, computing~$\bR$ via Procrustes requires
correspondences, yet establishing correspondences via descriptor
matching requires~$\bR$.
As in SLAM, where prediction and update are mutually dependent,
this coupling can be resolved by iterative refinement from a coarse
initial estimate.
Sec.~\ref{sec:method} develops a Soft~EM procedure that
alternates between soft-assigning correspondences within each local
window and updating the rotation estimate.

% =====================================================================
%  IV.  METHOD
% =====================================================================
\section{Training-Free Local Calibration}
\label{sec:method}

The alignment layer is training-free.
It aligns the outputs of any two frozen encoders via closed-form SVD,
without gradient descent or learned parameters.
Because $\bR(\bx)$ is non-stationary, the estimator must be local.
The low effective rank implies that $W_{\mathrm{eff}}{\geq}7$
frames suffice, and anisotropic sensitivity
(Theorem~\ref{thm:perturbation}) tolerates noisy correspondences.

Given a coarse initial position, Algorithm~\ref{alg:sliding}
estimates the local rotation~$\bR$ so that cross-modal descriptors
become directly comparable.
A Soft~EM procedure resolves the mutual dependence between~$\bR$
and the correspondences by
alternating between soft-assigning correspondences (E-step) and
updating the rotation estimate (M-step).
The procedure converges whenever the initialization error
remains within the spatial coherence length of~$\bR(\bx)$
(Supplementary Sec.~F, Table~A2).

\begin{algorithm}[!b]
	\caption{Local Soft Expectation-Maximization}
	\label{alg:sliding}
	\begin{algorithmic}[1]
		\REQUIRE Query $\{(\bz_i, \bp_i)\}_{i=1}^{N}$;
		database $\{(\by_j, \boldsymbol{q}_j)\}_{j=1}^{N_{\mathrm{db}}}$;
		search radius~$r$; spatial prior width~$\sigma$;
		temperature~$\tau$; decay factor~$\lambda$
		\STATE $\mathcal{N}_i \leftarrow
		\{j : \|\bp_i - \boldsymbol{q}_j\| < r\}$ for each $i$
		\hfill\COMMENT{local window}
		\STATE Precompute spatial log-prior:
		$L_{ij} \leftarrow
		-\|\bp_i - \boldsymbol{q}_j\|^2 / (2\sigma^2)$
		for $j \in \mathcal{N}_i$
		\STATE $\bM \leftarrow \boldsymbol{0}_{K\times K}$
		\FOR{$i = 1, \ldots, N$}
		\STATE $\bR \leftarrow \mathrm{Procrustes}(\bM)$
		\STATE $s_j \leftarrow \cos(\bR\,\bz_i,\,\by_j) / \tau + L_{ij}$
		\hfill\COMMENT{E-step}
		\STATE $w_j \leftarrow \mathrm{softmax}_j(s_j)$
		\STATE $\tilde{\by} \leftarrow \sum_j w_j\,\by_j$
		\STATE $\bM \leftarrow \lambda\,\bM + \bz_i\,\tilde{\by}^\top$
		\hfill\COMMENT{M-step}
		\STATE $\bz_{\mathrm{aligned},i} \leftarrow \bz_i\,\bR$
		\STATE $j^*_i \leftarrow \argmax_j
		\cos(\bz_{\mathrm{aligned},i},\,\by_j)$
		\ENDFOR
		\RETURN Retrieval indices $\{j^*_i\}$
	\end{algorithmic}
\end{algorithm}

\subsection{Local Soft Expectation-Maximization}
\label{sec:algorithm}
Given query features $\{(\bz_i, \bp_i)\}$ with coarse positions
and a geotagged database $\{(\by_j, \boldsymbol{q}_j)\}$, the
correspondences between them are unobserved, making this a
latent-variable problem.
The E-step infers soft correspondences from the current rotation
estimate and the spatial prior. The softmax-weighted database
features yield a pseudo co-observation~$\tilde{\by}$ at each
query position.
The M-step updates the cross-covariance~$\bM$ from these pseudo
co-observations and recomputes~$\bR$ via Procrustes
(Algorithm~\ref{alg:sliding}).

\subsubsection{Hyperparameters}
\label{sec:param_analysis}
Algorithm~\ref{alg:sliding} depends on five hyperparameters,
all set once and held fixed across datasets.

\textbf{PCA dimension $K$.}\quad
$K{=}128$ captures $>$91\% of the oracle Procrustes capacity
with diminishing returns beyond this point.

\textbf{Decay factor $\lambda$.}\quad
The effective window size is $W_{\mathrm{eff}} \approx 1/(1{-}\lambda)$.
With $\lambda{=}0.85$ ($W_{\mathrm{eff}} \approx 7$), exponential
forgetting smoothly tracks the non-stationary rotation, avoiding
the rank discontinuities of a hard sliding window.
The averaging operates on~$\bM$, not on~$\bR$ directly, because an
exponential moving average in rotation space loses the singular-value
information that the SVD requires to resolve the signal subspace.

\textbf{Spatial prior width $\sigma$.}\quad
$\sigma$ must be smaller than the coherence length of $\bR(\bx)$,
which scales as $W_{\mathrm{eff}}$ times the inter-frame distance
(typically $25$--$50$\,m for driving datasets).
Performance is robust to moderate increases in $\sigma$
(Supplementary Sec.~I).

\textbf{Temperature $\tau$.}\quad
$\tau$ scales the softmax correspondence weight~$w_j$ in the E-step.
Small $\tau$ sharpens the distribution so that the
highest-scoring database frame dominates~$w_j$,
while large $\tau$ spreads weight across candidates.
$\tau{=}0.1$ is set as a trade-off between the two regimes.

\begin{figure}[!t]
	\centering
	\includegraphics[width=0.82\linewidth]{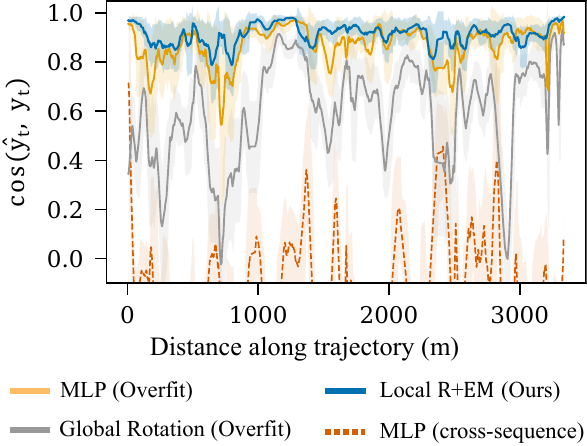}
	\caption{Per-frame cosine similarity
		$\cos(\bR\bz_t, \by_t)$ between the aligned query and the
		ground-truth database descriptor along the VIVID trajectory
		(Gaussian-smoothed, ${\pm}1\sigma$ shading).
		Local Soft~EM (blue) maintains $\cos > 0.8$ throughout.
		A single global $\bR$ fitted to all co-observations
		(gray) oscillates between $0.3$ and $0.8$.
		The MLP adapter~\cite{FreezeAlign2025} trained on
		the same environment (orange) tracks the local estimator
		in trained segments, but degrades
		in a different environment (dashed).}
	\label{fig:coldstart}
\end{figure}

\subsubsection{Estimator Characteristics}
\label{sec:estimator_char}
Three additional properties affect how the estimator
performs in practice.

\textbf{PCA transferability.}\quad
The PCA bases $\bV_z, \bV_y$ are fitted offline on a training split
and applied without modification to all test environments.
Replacing training-set bases with test-set bases changes retrieval
accuracy by less than one percentage point, even across different
datasets and encoder pairs, confirming that the principal subspace
is a property of the descriptor distribution rather than a
dataset-specific artifact.
At test time, only $\bR$ requires online estimation.

\textbf{Sensitivity to $(\sigma,\tau)$.}\quad
A grid sweep over $\sigma \in [10, 80]$\,m and $\tau \in [0.05, 0.5]$
reveals a broad plateau (Supplementary Fig.~A1).
All combinations with $\sigma \leq 30$\,m and $\tau \leq 0.2$
achieve $>$90\% Recall@1 on both KITTI and VIVID.
Performance degrades only at extreme settings.
Beyond $\sigma > 60$\,m the spatial prior becomes uninformative,
and beyond $\tau > 0.3$ the softmax flattens so that the E-step
loses discriminability.
The default pair $(\sigma{=}30\text{\,m},\;\tau{=}0.1)$ lies
well inside this plateau and is used without per-dataset tuning
in all experiments (Supplementary Sec.~I).

% fig_kw_heatmap moved to Appendix~\ref{app:extra_figs}

\textbf{Cold-start behavior.}\quad
Because $\bM$ is initialized to zero, the Procrustes step
is undefined until enough outer products $\bz_i\tilde{\by}^\top$
have accumulated.
The earliest frames also produce noisy E-step assignments
due to the absence of a reliable~$\bR$, creating a brief
initial transient whose length depends on the local
descriptor quality and the inter-frame spacing.

\subsection{Cross-Modal RANSAC}
\label{sec:ransac}

\subsubsection{Prior-Free Setting}
\label{sec:ransac_scenario}
The Soft~EM algorithm operates locally and therefore
requires an initial rotation estimate to enter its
convergence radius.
When a coarse position prior is available, the spatial
window restricts the candidate set and the exponential
averaging on~$\bM$ accumulates a reliable~$\bR$
within $W_{\mathrm{eff}}$ frames.
Otherwise, a hypothesis-and-verify stage can bootstrap
$\bR$ from a small set of frames so that Soft~EM can
take over.
A minimal hypothesis requires only two correspondences,
which suffice to determine~$\bR$ on the signal subspace
(Sec.~\ref{sec:two_pairs}).

\subsubsection{Hypothesis Sampling}
\label{sec:ransac_algo}
Each iteration of Algorithm~\ref{alg:ransac} samples two
query frames and pairs each with a database candidate
drawn from its top-$k$ cosine neighbors.
The resulting cross-covariance is solved by Procrustes,
and the hypothesis is scored by counting inliers whose
$\bR$-aligned cosine similarity exceeds~$\tau_{\mathrm{inlier}}$.
After $T_{\max}$ iterations the best hypothesis is refined
by re-running Procrustes on all its inliers.

\begin{algorithm}[!b]
  \caption{Cross-Modal RANSAC}
  \label{alg:ransac}
  \begin{algorithmic}[1]
    \REQUIRE Query features $\{\bz_i\}_{i=1}^{N}$ at positions
             $\{\bp_i\}$; database $\{(\by_j, \boldsymbol{q}_j)\}$;
             prior center $\hat{\boldsymbol{c}}$; radius~$\rho$
    \STATE $\mathcal{D} \leftarrow
           \{j : \|\boldsymbol{q}_j - \hat{\boldsymbol{c}}\| < \rho\}$
           \COMMENT{candidate selection}
    \FOR{$\mathrm{iter} = 1$ \TO $T_{\max}$}
      \STATE Sample query index $i$; pick database candidate
             $j \in \mathcal{D}$ from top-$k$ cosine
      \STATE Sample $k \neq i$; pick $l \in \mathcal{D}$
             from top-$k$ cosine
      \STATE $\bR \leftarrow\mathrm{Procrustes}~\!\bigl(
             [\bz_i;\bz_k],\,[\by_j;\by_l]\bigr)$
      \STATE $j^*_m \leftarrow \arg\max_{j \in \mathcal{D}}\cos(\bR\bz_m,\,\by_j)$ for each query $m$
      \STATE Score: $|\{m : \cos(\bR\bz_m, \by_{j^*_m}) > \tau_{\mathrm{inlier}}\}|$
      \STATE Keep best $(\bR, \mathrm{score})$
    \ENDFOR
    \STATE Refine: iterative Procrustes on all inliers of $\bR^*$
    \RETURN Best rotation $\bR^*$; inlier correspondences
  \end{algorithmic}
\end{algorithm}

\subsubsection{Two-Point Random Sample Consensus}
\label{sec:two_pairs}

Although $\SO(128)$ has $K(K{-}1)/2 = 8128$ free parameters,
two correspondences
$\bM = \bz_1\by_1^\top + \bz_2\by_2^\top$ already suffice
to bootstrap~$\bR$.
Each correspondence contributes one rank to~$\bM$, so two
from distinct locations give $\rank(\bM) = 2$.
By Theorem~\ref{thm:identifiability}, $r{=}2$ determines
$r(2K{-}r{-}1)/2 = 2(2{\cdot}128{-}3)/2 = 253$ signal DOF
of~$\bR$.
The remaining $8128 - 253 = 7875$ DOF lie in the null space
of~$\bM$ and are fixed solely by the orthogonality constraint
$\bR^\top\bR = \boldsymbol{I}$
(Theorem~\ref{thm:identifiability}).
By Theorem~\ref{thm:perturbation}, these components have
negligible effect on retrieval because the descriptors
carry near-zero energy along the null directions.
When $\reff \leq 2$, as on KITTI ($\reff{=}1.5$), two
correspondences fully capture the signal subspace.
For higher effective ranks the rank-2 hypothesis
covers only a subset of the signal directions, but
this partial estimate already suffices to bootstrap
Soft~EM, which then refines~$\bR$ with additional
frames.
The RANSAC inner loop, comprising a Procrustes solve from a
minimal sample and inlier scoring, is computationally inexpensive.
The SVD of a $K{\times}2$ matrix costs $\mathcal{O}(K)$, and
verifying $N$ inliers costs $\mathcal{O}(NK)$.
With $\sigma{=}30$\,m the candidate pool
$|\mathcal{D}|$ typically contains $20$--$50$ database frames,
and convergence is reached within tens of iterations
(Sec.~\ref{sec:exp_ransac}).

% ────────────────────────────────────────────────────────────────────
%  IV-D.  Cross-Modal Odometry Pipeline (implementation)
% ────────────────────────────────────────────────────────────────────
\subsection{Cross-Modal Retrieval Pipeline}
\label{sec:odometry}

The alignment converts a query descriptor into a
database-compatible representation, enabling cross-modal
retrieval between arbitrary encoder pairs.
We embed this retrieval in a GTSAM~\cite{GTSAM} pose-graph
to evaluate end-to-end positioning.
The setting is \emph{asymmetric deployment}.
A geo-tagged database
$\{(\by_j, \boldsymbol{q}_j)\}$ is built offline from one
modality, and at query time a different modality provides
descriptors $\{\bz_i\}$ together with odometry and a coarse
position prior.

\label{sec:two_pass}

\textbf{Pass~1 (forward, Soft~EM alignment + retrieval).}\quad
The query sequence is processed frame by frame via
Algorithm~\ref{alg:sliding}.
For each frame~$t$, the Soft~EM procedure estimates a local
rotation~$\bR(t)$ using the predicted position~$\hat{\bp}(t)$
as the spatial prior center.
The aligned descriptor $\bR(t)\,\bz(t)$ is then matched against
the database to find the nearest neighbor
$j^*(t) = \argmax_j \cos(\bR(t)\,\bz(t),\,\by_j)$,
and its geo-tagged position is stored as a retrieval measurement.
Importantly, the predicted position is not updated during
this pass, so that retrieval errors do not feed back into the
spatial prior of subsequent frames.

\textbf{Pass~2 (batch optimization).}\quad
Once all retrieval measurements are collected, a GTSAM
factor graph fuses them with the odometry and the initial
position prior.
Each retrieval measurement enters as a unary factor on
the corresponding $\mathrm{Point2}(x,y)$ node, wrapped in
a Huber robust
kernel to suppress outliers.
Because the solver optimizes all frames jointly,
corrections propagate bidirectionally across the
entire trajectory.
When the encoder pair includes a camera, retrieval error is
anisotropic: lateral displacement is well-resolved, but
along-track displacement is not, because 2D projection
discards depth.
We account for this by assigning a heading-aligned
elliptical noise model to each retrieval factor, with
tight lateral variance and loose along-track variance,
letting odometry resolve the depth ambiguity.

% =====================================================================
%  V.  EXPERIMENTS
% =====================================================================
\section{Experiments}
\label{sec:experiments}
To verify that the training-free alignment generalizes
across encoder pairs, sensor configurations, and
environments, we organize experiments along the four transmodal axes
introduced in Sec.~\ref{sec:introduction},
namely cross-modal (Table~\ref{tab:crossmodal}), cross-device
(Table~\ref{tab:cross_sensor}), cross-representation
(Table~\ref{tab:crossrep}), and cross-backbone
(Tables~\ref{tab:kitti_pairs},~\ref{tab:city_crossseq}).

\begin{table}[!b]
	\centering
	\caption{Evaluation datasets and their sensor suites.
		The three datasets span suburban, campus, and urban
		environments with four modalities (LiDAR, RGB, thermal
		infrared, event).}
	\label{tab:datasets}
	\small
	\begin{tabular}{@{}lll@{}}
		\toprule
		Dataset & Environment & Sensors \\
		\midrule
		KITTI-360~\cite{KITTI360} & Suburban
		& \makecell[tl]{Velodyne (HDL-64E)\\PointGrey (stereo)} \\
		\cmidrule(lr){1-3}
		VIVID~\cite{VIVID} & Campus
		& \makecell[tl]{Ouster (OS1-64)\\PointGrey (fisheye)\\FLIR A65 (LWIR)\\DVXplorer (event)} \\
		\cmidrule(lr){1-3}
		HeLiPR~\cite{HeLiPR} & Urban
		& \makecell[tl]{Ouster (OS2-128)\\Velodyne (VLP-16)\\Livox (Avia)} \\
		\bottomrule
	\end{tabular}
\end{table}

\subsection{Datasets and Setup}
\label{sec:datasets}

\textbf{Datasets.}\quad
Table~\ref{tab:datasets} lists the three evaluation datasets.
KITTI-360~\cite{KITTI360} provides a dense suburban sequence
(seq\,0010) with co-registered LiDAR and camera frames.
VIVID~\cite{VIVID} offers two sequences (Campus and City) with
four co-registered modalities.
The cross-sequence protocol
(day2 query, day1 database) tests generalization across sessions.
HeLiPR~\cite{HeLiPR} contains three heterogeneous LiDARs with
different scan patterns (uniform, sparse, non-repetitive),
enabling cross-device evaluation across scan patterns.

\textbf{Encoders.}\quad
All encoders are frozen at test time.
\textsc{Rosetta} never modifies their weights.
For image features we use publicly available foundation
models, with DINOv3-ViT-L/16~\cite{DINOv3} (300\,M parameters)
as the default and DINOv2-ViT-L and
DINOv2-ViT-B~\cite{DINOv2} in cross-backbone experiments.
For point-cloud features we use off-the-shelf encoders
such as MinkLoc3Dv2~\cite{MinkLoc3Dv2} and
PointNetVLAD~\cite{PointNetVLAD}, along with a
custom-trained point cloud encoder (PCE, 15\,M parameters)
that allows controlled experiments on training
configuration and descriptor dimension.
PCE is a
five-stage spconv~\cite{spconv} network with strided
$2{\times}$ downsampling, global max-pooling, and a GELU
projection head producing a 1024-D descriptor.
PCE is trained with the Barlow Twins loss for 30 epochs
using AdamW ($\mathrm{lr}{=}10^{-3}$) on the eight
non-test KITTI-360 sequences
(0000, 0002--0007, 0009, totaling ${\sim}67$k frames)
and validated on the held-out test sequence~0010.

\textbf{Comparison protocol.}\quad
Joint cross-modal methods such as VXP~\cite{VXP} require
paired training data and modify the encoder weights,
whereas \textsc{Rosetta} operates post-hoc on frozen
encoders with no paired supervision
(Table~\ref{tab:setting}).
A direct accuracy comparison would therefore mix
alignment quality with encoder quality.
Instead, we fix the encoders and compare alignment
strategies that share the same frozen representations,
namely overfitting Procrustes (oracle upper bound fitted on
all test co-observations) and global Procrustes
(fitted on a training split, applied to validation).

\begin{table}[!t]
	\centering
	\caption{Problem setting comparison. Unlike joint or
		supervised methods, \textsc{Rosetta} aligns independently
		trained, frozen encoders without any paired training data.}
	\label{tab:setting}
	\small
	\begin{tabular}{@{}lccc@{}}
		\toprule
		Method & \makecell{Super-\\vision} & \makecell{Learned\\module} & \makecell{Zero-shot\\pairing} \\
		\midrule
		VXP~\cite{VXP}             & Paired & Both enc. & -- \\
		(LC)$^2$~\cite{LC2}        & Paired & Both enc. & -- \\
		MLP~\cite{FreezeAlign2025} & Paired & Projector & -- \\
		\textbf{\textsc{Rosetta}}  & \textbf{None} & \textbf{None} & \textbf{\checkmark} \\
		\bottomrule
	\end{tabular}
\end{table}

\textbf{Baselines.}\quad
Concretely, we compare five alignment strategies on the
same frozen representations.
(i)~\emph{Raw}, no alignment, where the best-trained model
(Barlow Twins + GELU) is used directly.
(ii)~\emph{Overfitting Procrustes$^*$}, a single $\bR$
fitted on all test-set co-observations, serving as an
oracle upper bound on any constant-$\bR$ method.
(iii)~\emph{MLP Adapter}~\cite{FreezeAlign2025}, a 2-layer
MLP projector ($128$-$128$-$128$, ReLU) trained on paired
co-observations with MSE$+$cosine loss (500 epochs, AdamW),
evaluated both same-environment and cross-environment.
(iv)~\emph{Global Procrustes}, where $\bR$ is computed on a
training split and applied to the validation split,
testing whether a single alignment transfers across space.
(v)~\emph{Local Soft~EM (Ours)} (Algorithm~\ref{alg:sliding})
with $K{=}128$, $\tau{=}0.1$, $\lambda{=}0.85$, and
$\sigma{=}15$\,m, chosen well below the coherence
length of $\bR(\bx)$ (${\sim}35$\,m,
Sec.~\ref{sec:param_analysis});
requires only a coarse position prior, no co-observation.
Unless noted otherwise, all methods in Table~\ref{tab:main}
share the same frozen representations,
PCE / DINOv3 for KITTI-360 and VIVID
(LiDAR-to-camera), and two identical PCE instances
with Ouster versus Velodyne inputs for HeLiPR
(cross-device).
The same encoder configuration applies to all
subsequent experiments unless explicitly stated.

\textbf{Test environment.}\quad
All experiments were conducted on a single NVIDIA RTX 4070 Ti SUPER
(16\,GB VRAM) and AMD Ryzen 9 5900XT CPU (single thread).
Timing analysis is reported in Sec.~\ref{sec:exp_compute}.
Source code and pre-extracted features will be released upon
acceptance.

\begin{table}[!t]
	\centering
	\caption{Cross-modal retrieval (R@1@25\,m, global NN).
		$^*$Requires test co-observations.
		$^\dagger$Cross-sequence.
		$^\ddagger$Pretrained~\cite{LC2}.
		$^\S$Fine-tuned per dataset.
		$^\P$MLP trained on target environment.}
	\label{tab:main}
	\footnotesize
	\begin{tabular}{@{}lccc@{}}
		\toprule
		Method & KITTI & VIVID$^\dagger$ & HeLiPR \\
		\midrule
		(LC)$^2$ pretrained$^\ddagger$~\cite{LC2} & 0.025 & 0.037 & 0.031 \\
		(LC)$^2$ fine-tuned$^\S$ & 0.509 & 0.505 & 0.652 \\
		Raw (no alignment) & 0.305 & 0.058 & 0.014 \\
		MLP Adapter (cross-env)   & 0.023 & 0.061 & 0.032 \\
		Overfitting Procrustes$^*$ & 0.998 & 0.857 & 0.649 \\
		MLP Adapter (same-env)$^\P$ & 0.988 & 0.901 & 0.830 \\
		\midrule
		\textbf{Ours}
		& \textbf{0.944} & \textbf{0.933} & \textbf{0.681} \\
		\bottomrule
	\end{tabular}
\end{table}

\subsection{Alignment Comparison}
\label{sec:exp_main}

Table~\ref{tab:main} compares the five alignment strategies
across three datasets.
Without alignment, cross-modal retrieval ranges from
$0.014$ on HeLiPR to $0.305$ on KITTI, consistent with
the non-stationarity ceiling
(Supplementary Sec.~B).
The jointly trained (LC)$^2$~\cite{LC2} pretrained model scores
$0.025$, $0.037$, and $0.031$ across the three datasets.
Per-dataset fine-tuning raises KITTI to $0.509$,
VIVID to $0.505$, and HeLiPR to $0.652$,
but remains substantially below \textsc{Rosetta} on KITTI and VIVID
while requiring paired per-dataset training data.
The overfitting Procrustes$^*$ reaches $0.998$ on KITTI,
where scene variation is limited, but degrades to
$0.857$ on VIVID and $0.649$ on HeLiPR as
non-stationarity grows ($\reff{=}9.9$).
As spatial diversity grows, a single global rotation
becomes insufficient.
The same limitation bounds the global Procrustes
post-processing used in recent embedding-interoperability
methods~\cite{Moschella2023Relative,Maystre2025Procrustes,Vejendla2025DriftAdapter}.

\textbf{MLP adapter.}\quad
The supervised MLP adapter~\cite{FreezeAlign2025} reaches
$0.988$ on KITTI, $0.901$ on VIVID, and $0.830$ on HeLiPR
when trained and tested on the same environment, exceeding
the overfitting Procrustes on all three datasets.
However, cross-environment transfer fails entirely.
A KITTI-trained adapter applied to VIVID scores $0.061$ and
to HeLiPR $0.032$.
Conversely, a VIVID-trained adapter
on KITTI drops to $0.023$, all indistinguishable from the
raw cross-modal baseline.

\textbf{Ours.}\quad
Without any co-observation, Ours achieves $0.944$
on KITTI, $0.933$ on VIVID (cross-sequence), and $0.681$
on HeLiPR.
On VIVID, where $\reff{=}5.1$, Ours surpasses both
the overfitting Procrustes ($0.857$) and the same-env
MLP adapter ($0.901$), because the local estimator tracks
position-dependent rotations that a global mapping
averages away.
On KITTI ($\reff{=}1.5$), non-stationarity is mild and
both global and local approaches perform well
($0.998$ vs.\ $0.944$).
On HeLiPR ($\reff{=}9.9$), the gap narrows
($0.681$ vs.\ $0.649$).
Here, the larger inter-frame distance (5.9\,m) provides
fewer co-observations within the coherence length,
limiting the rank that the local estimator can resolve
(Supplementary Sec.~A-6).
Ours also surpasses the jointly trained
(LC)$^2$ ($0.505$ on VIVID).
Each frozen encoder learns a stronger representation
independently, and our method recovers the inter-encoder
relationship post hoc.\label{rem:frozen_better}
Qualitative examples are shown in
Fig.~\ref{fig:qualitative}.

\begin{table}[!t]
	\centering
	\caption{Cross-device retrieval on HeLiPR, Axis~(ii)
		(R@1@25\,m, global NN). $^*$Requires test co-observations.
		$^\dagger$MinkLoc3Dv2 (256-D; separate PCA).}
	\label{tab:cross_sensor}
	\footnotesize
	\begin{tabular}{llcc}
		\toprule
		Encoder & Sensor pair & Overfitting$^*$ & \textbf{Ours} \\
		\midrule
		\multirow{5}{*}{PCE}
		& Ouster $\to$ Ouster (same-sensor) & \multicolumn{2}{c}{0.967} \\
		\cmidrule(lr){2-4}
		& Ouster $\to$ Velodyne (DCC04)    & 0.649 & \textbf{0.681} \\
		& Ouster $\to$ Avia (DCC04)        & 0.652 & \textbf{0.695} \\
		& Ouster $\to$ Velodyne (KAIST04)  & 0.675 & \textbf{0.732} \\
		& Velodyne $\to$ Ouster (DCC04)    & {0.691} & 0.680 \\
		\midrule
		\multirow{5}{*}{MinkLoc$^\dagger$}
		& Ouster $\to$ Ouster (same-sensor) & \multicolumn{2}{c}{0.998} \\
		\cmidrule(lr){2-4}
		& Ouster $\to$ Velodyne (DCC04)    & 0.959 & 0.876 \\
		& Ouster $\to$ Avia (DCC04)        & 0.780 & \textbf{0.838} \\
		& Ouster $\to$ Velodyne (KAIST04)  & {0.906} & 0.809 \\
		& Velodyne $\to$ Ouster (DCC04)    & {0.930} & 0.893 \\
		\bottomrule
	\end{tabular}
\end{table}

\begin{figure}[!b]
	\centering
	\includegraphics[width=\linewidth]{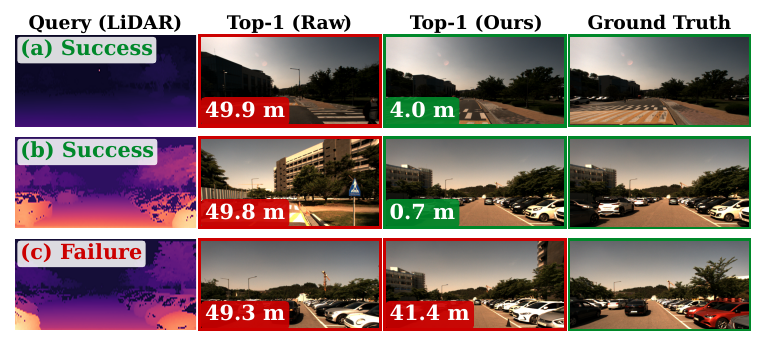}
	\caption{Qualitative cross-modal retrieval on VIVID
		(day2 LiDAR query $\to$ day1 camera database).
		Each row shows the query, the top-1 match without and with
		our alignment, and the ground truth.
		\textbf{(a,\,b)}~Our alignment corrects a wrong raw match.
		\textbf{(c)}~Failure: perceptual aliasing between visually
		similar scenes persists after alignment.}
	\label{fig:qualitative}
\end{figure}

\subsection{Cross-Device Generality}
\label{sec:exp_cross_sensor}

\begin{table}[!t]
	\centering
	\caption{Cross-modal retrieval on VIVID, Axis~(i)
		(R@1@25\,m, global NN).
		$^*$Requires test co-observations.
		$^\dagger$Cross-sequence.}
	\label{tab:crossmodal}
	\small
	\setlength{\tabcolsep}{4pt}
	\begin{tabular}{l cc}
		\toprule
		Pair & Overfitting$^*$ & \textbf{Ours} \\
		\midrule
		$\mathcal{T}{\to}\mathcal{R}^\dagger$ & 0.776 & \textbf{0.858} \\
		$\mathcal{R}{\to}\mathcal{T}^\dagger$ & 0.873 & \textbf{0.886} \\
		$\mathcal{E}{\to}\mathcal{R}^\dagger$ & 0.769 & 0.737 \\
		$\mathcal{R}{\to}\mathcal{E}^\dagger$ & 0.751 & \textbf{0.756} \\
		$\mathcal{T}{\to}\mathcal{E}^\dagger$ & 0.609 & \textbf{0.744} \\
		$\mathcal{E}{\to}\mathcal{T}^\dagger$ & 0.739 & 0.734 \\
		\bottomrule
	\end{tabular}
\end{table}

\begin{table}[!b]
	\centering
	\caption{Cross-representation retrieval on VIVID Campus,
		Axes~(iii)\,\&\,(iv) (R@1@25\,m, global NN).
		$^*$Requires test co-observations.
		$^\dagger$MinkLoc3Dv2 (256-D; separate PCA).}
	\label{tab:crossrep}
	\footnotesize
	\setlength{\tabcolsep}{4pt}
	\begin{tabular}{ll cc}
		\toprule
		Pair & Encoders & Overfitting$^*$ & \textbf{Ours} \\
		\midrule
		\multicolumn{4}{l}{\textit{Cross-sequence (day2$\to$day1):}} \\
		$\mathcal{L}{\to}\mathcal{R}$ & PCE / DINOv3          & 0.859 & \textbf{0.909} \\
		$\mathcal{R}{\to}\mathcal{L}$ & DINOv3 / PCE          & {0.711} & 0.677 \\
		$\mathcal{L}{\to}\mathcal{R}$ & MinkLoc$^\dagger$ / DINOv3 & 0.905 & \textbf{0.946} \\
		$\mathcal{R}{\to}\mathcal{L}$ & DINOv3 / MinkLoc$^\dagger$ & 0.739 & 0.736 \\
		$\mathcal{E}{\to}\mathcal{R}$ & Event DINOv3 / DINOv3 & 0.778 & 0.735 \\
		$\mathcal{R}{\to}\mathcal{E}$ & DINOv3 / Event DINOv3 & 0.749 & 0.770 \\
		\midrule
		\multicolumn{4}{l}{\textit{Within-sequence (day1, self-exclusion):}} \\
		$\mathcal{L}{\to}\mathcal{R}$ & PNV / DINOv3          & 0.631 & \textbf{0.865} \\
		$\mathcal{R}{\to}\mathcal{L}$ & DINOv3 / PNV          & 0.720 & \textbf{0.721} \\
		$\mathcal{E}{\to}\mathcal{R}$ & EventVLAD / DINOv3    & 0.512 & \textbf{0.653} \\
		$\mathcal{R}{\to}\mathcal{E}$ & DINOv3 / EventVLAD    & 0.608 & \textbf{0.667} \\
		\bottomrule
	\end{tabular}
\end{table}
To evaluate whether our method generalizes beyond
cross-modal (LiDAR--camera) to cross-device (LiDAR--LiDAR) settings,
experiments are conducted on HeLiPR~\cite{HeLiPR}, which provides
synchronized data from three LiDAR sensors with different scan
patterns, namely Ouster OS2-128 (uniform), Velodyne VLP-16 (sparse), and
Livox Avia (non-repetitive).
All three encoders are identical PCE instances trained
on Ouster data, so the alignment gap arises solely from the difference
in scan geometry.
Raw cross-device retrieval drops to $0.01$--$0.03$ despite $0.967$
same-sensor performance (Table~\ref{tab:cross_sensor}).
Table~\ref{tab:cross_sensor} shows that Ours matches or
exceeds the overfitting Procrustes on three of four PCE pairs
($0.68$--$0.73$ vs.\ $0.65$--$0.69$), with the remaining pair
within $0.01$, confirming that the local estimator attains
overfitting-Procrustes-level alignment without any
co-observation.
The gauge transformation model therefore extends to differently
conditioned encoders, not only LiDAR--camera combinations.
The remaining gap to same-sensor performance ($0.967$) is not
an alignment failure but an encoder generalization issue.
PCE is trained exclusively on Ouster scans, so its descriptors
degrade on the sparser Velodyne and non-repetitive Avia inputs.
Repeating the experiment with MinkLoc3Dv2, which is trained on
diverse scan patterns, confirms this interpretation.
Same-sensor performance rises to $0.998$, and cross-sensor
retrieval with Ours reaches $0.81$--$0.89$
(vs.\ $0.68$--$0.73$ for PCE), halving the gap to same-sensor
and confirming that the shortfall is an encoder generalization
issue, not an alignment failure.

\subsection{Transmodal Generality}
\label{sec:exp_multimodal}
The preceding subsections established cross-device generality on HeLiPR.
This subsection evaluates the remaining three axes (cross-modal,
cross-representation, and cross-backbone) on the VIVID campus dataset,
which provides synchronized LiDAR (Ouster OS1-64), RGB (PointGrey fisheye),
thermal infrared (FLIR A65, 14-bit), and event camera (DVXplorer) data.
DINOv3-ViT-L/16 is applied to all three visual modalities without
modification (thermal images are quantized to 8-bit grayscale and
replicated to three channels, and event frames are EventVLAD
reconstructions).
Tables~\ref{tab:crossmodal} and~\ref{tab:crossrep} report
cross-modal and cross-representation retrieval results, respectively,
using the cross-sequence protocol (day2 query, day1 database).
\paragraph{Cross-modal}
Table~\ref{tab:crossmodal} shows that
when DINOv3 is applied to all visual modalities,
Ours achieves $0.73$--$0.89$ R@1@25\,m across all six directions,
exceeding the overfitting Procrustes on four of six pairs.
Thermal--RGB ($0.858$/$0.886$) is the strongest pair,
consistent with DINOv3 encoding similar structural
information from thermal and RGB inputs.
\paragraph{Cross-representation}
Table~\ref{tab:crossrep} shows that
when query and database use independently trained encoders,
raw retrieval is near zero, yet
Ours recovers $0.65$--$0.95$.
MinkLoc3Dv2~\cite{MinkLoc3Dv2}, a unimodal LiDAR
encoder with no joint cross-modal training, paired with DINOv3
achieves $0.946$ ($\mathcal{L}{\to}\mathcal{R}$) with our method,
the highest among all encoder pairs.
PointNetVLAD~\cite{PointNetVLAD} paired with DINOv3
achieves $0.865$,
demonstrating that even legacy encoders benefit from the alignment.

\paragraph{Directional asymmetry}
In cross-representation retrieval,
$\mathcal{L}{\to}\mathcal{R}$ outperforms
$\mathcal{R}{\to}\mathcal{L}$ by $0.14$--$0.23$
across all three LiDAR--camera encoder pairs
in Table~\ref{tab:crossrep}.
This asymmetry arises because the alignment estimates
$\bR$ from the database-side cross-covariance
(Sec.~\ref{sec:algorithm}).
When the database encoder (DINOv3) has higher
discriminability, the cross-covariance matrix $\bM$ is
better conditioned, yielding more accurate rotation estimates.
This confirms that alignment quality is upper-bounded by
the discriminability of the underlying representations,
particularly on the database side.

\begin{table}[!t]
  \centering
  \caption{Encoder-pair sweep on KITTI-360
  (R@1@25\,m, global NN).
  $^*$Requires test co-observations.
  $^\times$MLP trained on VIVID, transferred to KITTI.}
  \label{tab:kitti_pairs}
  \footnotesize
  \setlength{\tabcolsep}{3.5pt}
  \begin{tabular}{ll ccc}
    \toprule
    Pair & Encoders & Overfitting$^*$ & MLP$^\times$ & \textbf{Ours} \\
    \midrule
    \multicolumn{5}{l}{\textit{Cross-representation (Axis~iii):}} \\
    $\mathcal{L}{\to}\mathcal{R}$ & PCE / DINOv3        & 0.963 & 0.013 & 0.945 \\
    $\mathcal{R}{\to}\mathcal{L}$ & DINOv3 / PCE        & 0.868 & 0.007 & \textbf{0.966} \\
    $\mathcal{L}{\to}\mathcal{R}$ & PCE / DINOv2-ViT-L  & 0.965 & 0.002 & \textbf{0.965 }\\
    $\mathcal{R}{\to}\mathcal{L}$ & DINOv2-ViT-L / PCE  & 0.922 & 0.017 & \textbf{0.960} \\
    $\mathcal{L}{\to}\mathcal{R}$ & PCE / DINOv2-ViT-B  & 0.960 & 0.018 & \textbf{0.969} \\
    $\mathcal{R}{\to}\mathcal{L}$ & DINOv2-ViT-B / PCE  & 0.904 & 0.009 & \textbf{0.976} \\
    \midrule
    \multicolumn{5}{l}{\textit{Cross-backbone (Axis~iv):}} \\
    $\mathcal{R}{\to}\mathcal{R}$ & DINOv3 / DINOv2-ViT-L     & 0.984 & 0.016 & \textbf{0.989} \\
    $\mathcal{R}{\to}\mathcal{R}$ & DINOv2-ViT-L / DINOv3     & 0.995 & 0.004 & 0.987 \\
    $\mathcal{R}{\to}\mathcal{R}$ & DINOv3 / DINOv2-ViT-B     & 0.987 & 0.029 & \textbf{0.992} \\
    $\mathcal{R}{\to}\mathcal{R}$ & DINOv2-ViT-B / DINOv3     & 0.989 & 0.046 & 0.985 \\
    $\mathcal{R}{\to}\mathcal{R}$ & DINOv2-ViT-L / DINOv2-ViT-B & 0.998 & 0.001 & 0.997 \\
    $\mathcal{R}{\to}\mathcal{R}$ & DINOv2-ViT-B / DINOv2-ViT-L & 1.000 & 0.023 & 0.991 \\
    \bottomrule
  \end{tabular}
\end{table}

\paragraph{Cross-backbone universality}
Tables~\ref{tab:kitti_pairs} and~\ref{tab:city_crossseq} extend
the evaluation to all pairwise encoder combinations on KITTI-360
and VIVID City, covering both cross-representation (Axis~iii) and
cross-backbone (Axis~iv) pairs.
On KITTI-360, Ours achieves $0.94$--$0.99$ on cross-representation
and $0.98$--$0.99$ on cross-backbone pairs.
On VIVID City, where stronger non-stationarity degrades the
overfitting Procrustes, Ours outperforms it by $0.06$--$0.24$
across all pairs.
The rotation dynamics underlying this invariance do not depend
on a shared training procedure.
Replacing DINOv3-ViT-L with DINOv2-ViT-L~\cite{DINOv2}, a backbone
never seen during any training stage, yields nearly identical
frame-to-frame rotation norms
($\lVert\Delta\bR\rVert_F{=}14.47$ versus $14.46$)
and comparable autocorrelation ($\rho_1{=}0.182$).
The MLP$^\times$ columns in both tables confirm that a supervised
adapter trained on one environment collapses to $0.00$--$0.05$ when
transferred to another, whereas Ours requires no retraining.

\paragraph{Cross-modal RANSAC}
\label{sec:exp_ransac}
With a position prior of $\sigma{=}30$\,m and $N_q{=}3$ query frames,
RANSAC achieves perfect retrieval on KITTI (raw $0.305$,
${\sim}82$ iterations).
The method is encoder-agnostic.
On VIVID, replacing the encoder pair
yields identical RANSAC performance, because the spatial consistency
constraint dominates over feature quality when the GPS prior is tight.
Detailed RANSAC results across datasets, including the dependence on
$N_q$ and $\sigma$, are reported in Supplementary Sec.~H.

\paragraph{Scaling law verification}
\label{sec:exp_scaling}
Sec.~\ref{sec:thm_scaling} predicted that the effective rank
grows sublinearly with window length, $\reff = a\,L^{\beta}$
with $\beta < 1$.
Fig.~\ref{fig:scaling} validates this prediction.
The power law holds within
each dataset ($R^2 > 0.88$, cross-validated $R^2 > 0.80$),
with $\beta$ ranging from $0.40$ (HeLiPR, repetitive urban
facades) to $0.86$ (VIVID, diverse campus).
Cross-dataset prediction yields $R^2 < 0.5$, confirming that
$(a, \beta)$ are deployment-specific, but the functional form
$\beta < 1$ holds across all tested environments, so local alignment
is consistently better conditioned than global.

\begin{table}[!t]
	\centering
	\caption{Cross-sequence retrieval on VIVID City
		(day2 query, day1 database; R@1@25\,m, global NN).
		$^*$Requires test co-observations.
		$^\times$MLP trained on KITTI, transferred to VIVID City.
		$^\dagger$MinkLoc3Dv2 (256-D; separate PCA).}
	\label{tab:city_crossseq}
	\footnotesize
	\setlength{\tabcolsep}{3.5pt}
	\begin{tabular}{ll ccc}
		\toprule
		Pair & Encoders & Overfitting$^*$ & MLP$^\times$ & \textbf{Ours} \\
		\midrule
		\multicolumn{5}{l}{\textit{Cross-representation (Axis~iii):}} \\
		$\mathcal{L}{\to}\mathcal{R}$ & PCE / DINOv3            & 0.611 & 0.007 & \textbf{0.758} \\
		$\mathcal{R}{\to}\mathcal{L}$ & DINOv3 / PCE            & 0.612 & 0.007 & \textbf{0.856} \\
		$\mathcal{L}{\to}\mathcal{R}$ & MinkLoc$^\dagger$ / DINOv3 & 0.762 & 0.005 & \textbf{0.839} \\
		$\mathcal{R}{\to}\mathcal{L}$ & DINOv3 / MinkLoc$^\dagger$ & 0.686 & 0.005 & \textbf{0.821} \\
		$\mathcal{L}{\to}\mathcal{R}$ & PCE / DINOv2-ViT-L      & 0.695 & 0.005 & \textbf{0.795} \\
		$\mathcal{R}{\to}\mathcal{L}$ & DINOv2-ViT-L / PCE      & 0.698 & 0.006 & \textbf{0.785} \\
		$\mathcal{L}{\to}\mathcal{R}$ & MinkLoc$^\dagger$ / DINOv2-ViT-L & 0.748 & 0.003 & \textbf{0.881} \\
		$\mathcal{R}{\to}\mathcal{L}$ & DINOv2-ViT-L / MinkLoc$^\dagger$ & 0.744 & 0.005 & \textbf{0.866} \\
		\midrule
		\multicolumn{5}{l}{\textit{Cross-backbone (Axis~iv):}} \\
		$\mathcal{R}{\to}\mathcal{R}$ & DINOv3 / DINOv2-ViT-L   & 0.786 & 0.003 & \textbf{0.863} \\
		$\mathcal{R}{\to}\mathcal{R}$ & DINOv2-ViT-L / DINOv3   & 0.853 & 0.014 & \textbf{0.922} \\
		\bottomrule
	\end{tabular}
\end{table}

\subsection{Corroboration of Non-Stationarity}
\label{sec:exp_negative}

The non-stationarity analysis of Sec.~\ref{sec:nonstationarity}
predicts that any method assuming a fixed or slowly varying $\bR$
will fail when applied beyond the coherence length.
We test four such approaches on KITTI-360
using the PCE / DINOv3 encoder pair
(Table~\ref{tab:negative}).
Each fails for a reason directly traceable to the
theoretical analysis.

\begin{table}[!t]
	\centering
	\caption{Failed approaches on KITTI-360
		(Inlier({$<$}25\,m)).
		Each method attempts to learn or predict $\bR(x)$
		during training.}
	\label{tab:negative}
	\footnotesize
	\begin{tabular}{@{}lrl@{}}
		\toprule
		Approach & Inlier({$<$}25\,m) & Reason of failure\\
		\midrule
		Diff.\ Procrustes adapter   & 0.07--0.09  & SVD backward ill-cond. \\
		Rotation prediction MLP     & 0.15--0.20 & White-noise $\bOm$ \\
		$\bR$-consistency reg.      & 0.076   & Non-stationary process \\
		Cross-traversal             & 0.418  & Backbone repeatability \\
		\bottomrule
	\end{tabular}
\end{table}
\paragraph{Differentiable Procrustes adapter ($0.07$--$0.09$)}
A learnable SVD layer is inserted between the encoders, and the entire
pipeline is trained end to end with the retrieval loss.
The SVD backward pass requires division by $\sigma_i - \sigma_j$,
which is ill-conditioned when singular values cluster, as is the case
in the low-$\reff$ regime.
Replacing the full SVD with \texttt{svd\_lowrank} at $r{=}16$
stabilizes the gradients but limits the adapter to rank-16
corrections, which are insufficient for the full-rank $\bOm$
dynamics.
Further details are provided in Supplementary Sec.~D.

\paragraph{Rotation prediction MLP ($0.15$--$0.20$)}
An MLP is trained to predict the local $\bR(t)$ from a concatenation
of query and database features.
Because $\bOm$ is white noise (Sec.~\ref{thm:nonstationary}), the
optimal predictor from features alone is the unconditional mean,
which reduces to the global Procrustes solution.

\paragraph{$\bR$-consistency regularization ($0.076$)}
Adding $\lambda\|\bR(t){-}\bR(t{-}1)\|_F^2$ to the training loss
encourages temporal smoothness.
This assumption conflicts with the observed dynamics, in which the
alignment changes abruptly between consecutive frames
(autocorrelation $= -0.018$).

\paragraph{Cross-traversal alignment ($0.418$)}
Features from a different traversal of the same route are used as
pseudo ground truth for alignment.
This approach fails because foundation-model features have limited
viewpoint repeatability.
The same location produces different descriptors when observed from
different trajectories, times, or weather conditions.

\subsection{Computational Cost}
\label{sec:exp_compute}
\label{sec:cost}

The alignment pipeline, comprising PCA projection ($0.1$\,ms),
Soft~EM iteration with $128{\times}128$ SVD ($1.1$\,ms),
and cosine retrieval ($0.1$\,ms),
runs on CPU in $1.2$\,ms per query.
Feature extraction dominates, at $0.3$\,s for DINOv3 and $0.050$\,s
for PCE on an RTX 4070 Ti SUPER.
At deployment the camera encoder (DINOv3) is evaluated only once
during map construction.
Each live query requires only the LiDAR
encoder ($50$\,ms) plus alignment and retrieval ($1.2$\,ms),
totaling ${\sim}51$\,ms, compatible with $10$\,Hz LiDAR operation.
By contrast, cross-modal methods that learn shared
embeddings (e.g., VXP at ${\sim}200$\,ms per LiDAR frame)
incur comparable extraction costs and additionally require
paired training, which our method eliminates entirely.
The alignment overhead is dominated by the $\mathcal{O}(K^3)$ SVD,
which can be further reduced via randomized SVD for $K > 256$.

\begin{table}[!b]
	\centering
	\caption{Cross-modal odometry (median position error, meters).}
	\label{tab:gtsam}
	\small
	\begin{tabular}{@{}lccc@{}}
		\toprule
		Dataset & Open-loop & Iso. & \textbf{Aniso.} \\
		\midrule
		KITTI-360 (3.3\,km) & 20.8 & 3.0 & \textbf{0.9} \\
		VIVID (3.7\,km)     & 22.8 & 2.1 & \textbf{1.3} \\
		HeLiPR DCC04 (5.4\,km)  & 25.6 & 5.2 & \textbf{3.9} \\
		HeLiPR KAIST04 (5.8\,km) & 28.0 & 5.4 & \textbf{2.1} \\
		\bottomrule
	\end{tabular}
\end{table}

\subsection{Cross-Modal Odometry}
\label{sec:exp_odometry}

Table~\ref{tab:gtsam} evaluates the pose-graph pipeline of
Sec.~\ref{sec:odometry} using the same encoder pairs as Table~\ref{tab:main}
on four sequences with simulated
odometry ($\alpha{=}2$\,\%/m noise, $\beta{=}1$\,\%/m bias,
20 Monte Carlo trials).
\label{sec:exp_anisotropic}
The isotropic covariance model already reduces the open-loop
error from $20$--$28$\,m to $2$--$5$\,m.
Replacing it with the anisotropic model
(Sec.~\ref{sec:odometry},
$\sigma_{\mathrm{lat}}{=}5$\,m, $\sigma_{\mathrm{dep}}{=}30$\,m,
selected on KITTI-360)
further reduces the median to $0.9$--$3.9$\,m, and the
improvement transfers to the three unseen datasets without re-tuning.

% =====================================================================
%  VI.  DISCUSSION
% =====================================================================
\section{Discussion}
\label{sec:discussion}

\subsection{Analysis}

\paragraph{Rotation model fidelity}
The local Procrustes residual averages $0.39$, compared
with $> 0.8$ for the global fit.
Because retrieval is a ranking task, the rotation need only
be accurate in the top-$\reff$ signal subspace
(Theorem~\ref{thm:perturbation}).
Within each local window, the discriminative component of the
inter-encoder map is well-approximated by an orthogonal
transformation, explaining why nonlinear alternatives yield
no improvement (Sec.~\ref{sec:exp_negative}).

\paragraph{Representation geometry}
The position-dependence of $\bR(\bx)$ arises because
architecturally distinct encoders respond to the same scene
elements with different feature-space orientations.
In a tunnel segment in VIVID City, where scene diversity
is low, $\|\Delta\bR\|_F$ variance drops by an order of
magnitude, confirming that scene content rather than
measurement noise drives the angular velocity.

\paragraph{Sequential processing}
The EMA estimator is insensitive to traversal direction
(R@1 varies by less than $0.015$).
The temporal uncorrelatedness of $\bOm(t)$ ensures
convergence within $W_{\mathrm{eff}} \approx 7$ frames
regardless of direction.
If $\bOm$ were temporally correlated, exponential forgetting
would systematically lag behind the true rotation, but the
white-noise structure prevents such accumulation.

\subsection{Implications}

\paragraph{Post-hoc calibration paradigm}
Classical sensor calibration estimates $\mathrm{SE}(3)$
extrinsics between physical coordinate frames and requires
co-mounted sensors that observe simultaneously.
Local alignment generalizes this to $\SO(K)$ between
feature spaces, treating each encoder as a black box.
Because the feature-space rotation is approximately constant
within the coherence length, a coarse position prior suffices
to replace physical co-observation at query time.

\paragraph{Extension to local features}
This work aligns global place descriptors without requiring
explicit feature correspondences.
Keypoint or patch descriptors from independently trained
extractors are expected to exhibit analogous
position-dependent misalignment, since architecturally
distinct encoders map shared scene content to different
orientations regardless of descriptor granularity.
Because the local rotation is estimated from a coarse
position prior alone, it can be applied directly to local
features within the same spatial window without knowing
which keypoints correspond across modalities.
Cross-modal geometric matching without retraining either
extractor may then become feasible, potentially extending
the approach from retrieval to metric localization.

\paragraph{Scaling law}
The sublinear growth $\reff \propto L^{\beta}$ with
$\beta < 1$ reflects spatial redundancy.
Nearby frames constrain overlapping rotation components,
so each additional observation refines already-determined
parameters rather than revealing new ones.
Denser sampling therefore always improves conditioning.

\paragraph{Gauge curvature and universal representations}
$\bOm(t)$ is the connection on the $\SO(K)$ fiber bundle
over the trajectory.
Its non-vanishing norm indicates that no global gauge choice
can simultaneously align all encoder pairs.
The Platonic Representation Hypothesis~\cite{Huh2024Platonic}
predicts $\|\bOm_{\mathcal{AB}}\| \to 0$ as encoder capacity
grows.
$\|\bOm_{\mathcal{AB}}\|$ thus provides a quantitative
diagnostic for representational convergence.
Two encoders with identical internal geometry yield
$\|\bOm_{\mathcal{AB}}\| \approx 0$ regardless of individual
curvature.

\subsection{Limitations}
\begin{itemize}
  \item The method requires a coarse position prior with
    $\sigma~\lesssim~35$\,m.
    Beyond this range the Soft~EM weights spread over incoherent
    rotation regimes.
    The first $W_{\mathrm{eff}} \approx 7$ frames after
    initialization produce degraded estimates while $\bM$
    accumulates sufficient rank.
  \item The alignment models the inter-encoder map as a single
    $\SO(K)$ rotation per window.
    The true deformation likely occupies a higher-dimensional
    manifold, and the average Procrustes residual quantifies this model gap.
    However, richer parameterizations would require proportionally more
    co-observed constraints per window.
  \item Alignment corrects the coordinate-system gap between
    encoders but cannot compensate for limitations intrinsic
    to the descriptors themselves, including undertraining, perceptual aliasing
    and untested deployment domains such as indoor, aerial,
    or underwater environments.
\end{itemize}

% =====================================================================
%  VII.  CONCLUSION
% =====================================================================
\section{Conclusion}
\label{sec:conclusion}

Independently trained encoders impose incompatible coordinate
systems on the feature space, and the relative transformation
between any two such systems varies spatially with white-noise
angular velocity.
This non-stationarity is not a training artifact but a geometric
consequence of how independently trained encoders respond to
spatially varying scenes: it persists regardless of model capacity
or dataset scale.
The sublinear scaling law, however, ensures that local alignment
is consistently better conditioned than global, making a training-free
local estimator both principled and practical.

Experiments across four transmodal axes, three datasets, and
four sensing modalities confirm that the local estimator matches
or exceeds the overfitting Procrustes without test-time
co-observations, while four learned alternatives fail for reasons
directly predicted by the theory.
The pose-graph experiment shows that aligned cross-modal retrievals
yield place-recognition constraints accurate enough to reduce
position error, even when the query and database use entirely
different sensor modalities.

Beyond the immediate retrieval application, the angular velocity
$\|\bOm_{\mathcal{AB}}\|$ can serve as a quantitative diagnostic for
representational convergence: two encoders whose internal geometries
agree would produce $\|\bOm_{\mathcal{AB}}\| \approx 0$, regardless of
how each individually warps the feature space.
Tracking this metric across foundation models and world models
provides a concrete criterion for progress toward modality-agnostic
spatial representations.
Until that convergence is achieved, the non-stationarity identified
here will recur wherever independently trained representations
must be fused, and local alignment offers a principled bridge.
Open problems include extension to indoor and aerial environments,
nonlinear alignment beyond $\SO(K)$, and incremental variants
that update the alignment estimate online.

% =====================================================================
%  REFERENCES
% =====================================================================
\bibliographystyle{IEEEtran}
\bibliography{citations}

% Append the separately compiled supplementary material after main references.
\includepdf[pages=-,pagecommand={\thispagestyle{empty}},linktodoc=false]{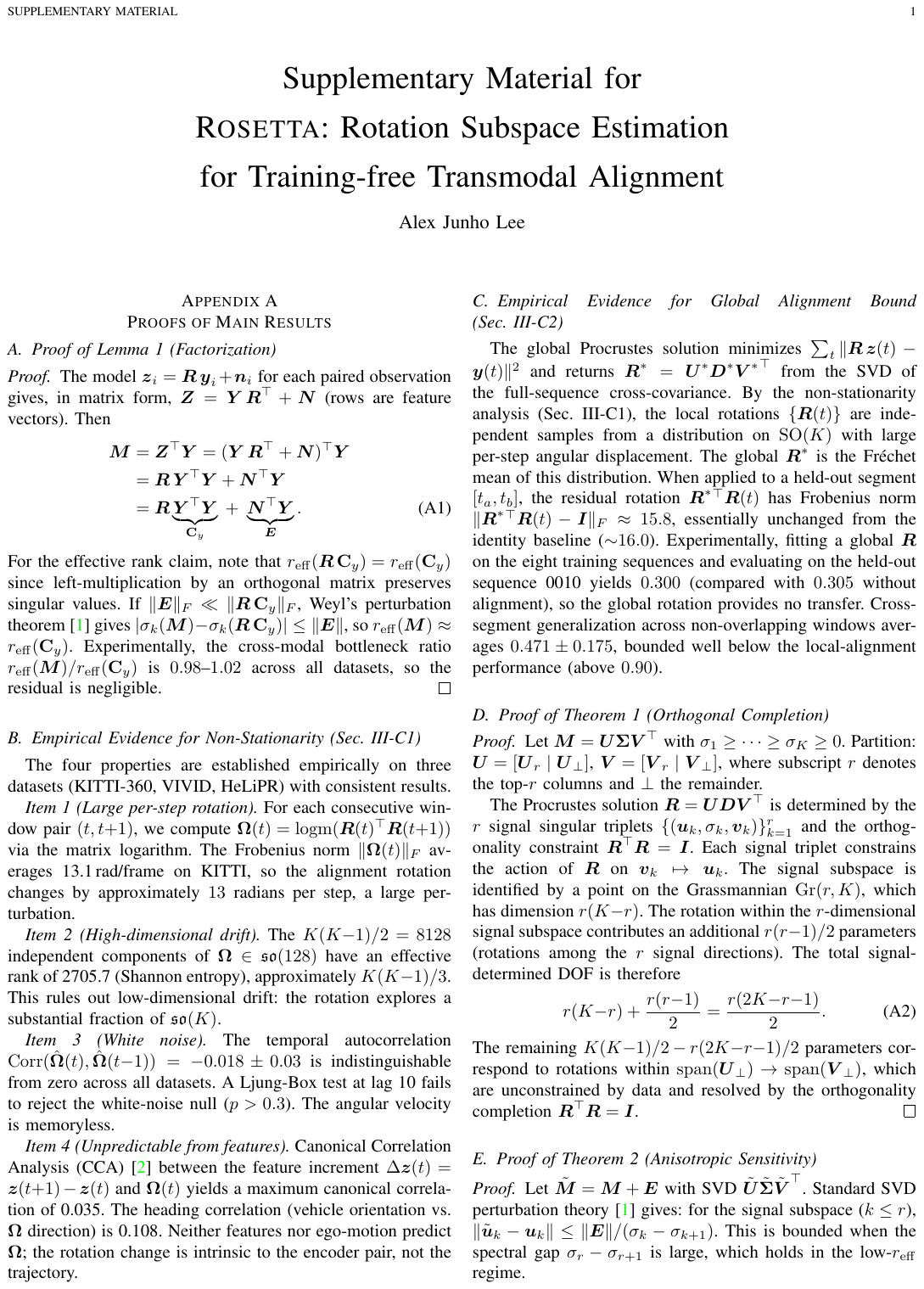}

\end{document}